\newtheorem{example}{Example}
\newtheorem{theorem}{Theorem}
\newtheorem{definition}{Definition}
\newtheorem{fact}{Fact}
\newtheorem{proposition}{Proposition}
\newtheorem{lemma}{Lemma}
\newtheorem{corollary}{Corollary}
\newcommand\citeappendix{the appendix}
\title{A Logic for Reasoning about Aggregate-Combine Graph Neural Networks}
\author{
Pierre Nunn$^1$\and
Marco Sälzer$^2$\and
François Schwarzentruber$^{1}$\And
Nicolas Troquard$^3$\\
\affiliations
$^1$University of Rennes, IRISA, CNRS, France\\
$^2$Theoretical Computer Science / Formal Methods, University of Kassel, Germany\\
$^3$Gran Sasso Science Institute, L'Aquila, Italy\\
\emails
pierre.nunn@ens-rennes.fr, marco.saelzer@uni-kassel.de, francois.schwarzentruber@ens-rennes.fr, nicolas.troquard@gssi.it
}
\newcommand{\citet}[1]{\citeauthor{#1}~\shortcite{#1}}
\renewcommand{\phi}{\varphi}
\newcommand{\set}[1]{\{#1\}}
\newcommand\setvertices V
\newcommand\setedges E
\newcommand\labeling \ell
\newcommand\setR{\mathbb R}
\newcommand{\aGNN}{\mathcal{A}}
\newcommand{\semanticsof}[1]{[[#1]]}
\newcommand{\statetv}[2]{x_{#1}(#2)}
\newcommand{\statet}[1]{x_{#1}}
\newcommand{\multiset}[1]{\{\{#1\}\}}
\newcommand{\logicKsharp}{K^{\#, -1}}
\newcommand{\logicKsharpone}{K^{\#}}
\newcommand{\Ap}{Ap}
\newcommand{\modalitynumber}{\#}
\newcommand{\istrue}[1]{1_{#1}}
\newcommand\union\cup
\newcommand\sigmabold{\vec{\sigma}}
\newcommand{\lbox}{\square}
\newcommand{\lboxupto}[1]{\lbox^{\rightarrow #1}}
\newcommand{\limply}{\rightarrow}
\newcommand{\co}{\ensuremath{\mathsf{co}}}
\newcommand{\EXPTIME}{\ensuremath{\mathsf{EXPTIME}}\xspace}
\newcommand{\PSPACE}{\ensuremath{\mathsf{PSPACE}}\xspace}
\begin{document}

\maketitle

\begin{abstract}
We propose a modal logic in which counting modalities appear in linear inequalities. We show that each formula can be transformed into an equivalent graph neural 
network (GNN). We also show that a broad class of GNNs can be transformed efficiently into a
formula, thus significantly improving upon the literature about the logical expressiveness of GNNs. We also show that the satisfiability problem is \PSPACE-complete. These results bring together the promise of using standard logical methods for reasoning about GNNs and their properties, particularly in applications such as GNN querying, equivalence checking, etc.
We prove that such natural problems can be solved in polynomial space. \end{abstract}

\newcommand{\Pgeneric}[1]{\textsf{\upshape{P#1}}\xspace}
\newcommand{\Pone}{\Pgeneric{1}}
\newcommand{\Ptwo}{\Pgeneric{2}}
\newcommand{\Pthree}{\Pgeneric{3}}
\newcommand{\Pfour}{\Pgeneric{4}}

\section{Introduction}

Graph Neural Networks (GNNs) perform computations on graphs or on pairs of graphs and vertices, referred to as pointed graphs. 
As a prominent deep learning model, GNNs find applications in various domains such as social recommendations \cite{DBLP:journals/kbs/SalamatLJ21}, drug discovery \cite{XIONG20211382}, material science and chemistry \cite{Reiser22GGNmaterialchemistry}, knowledge graphs \cite{Zi22KG}, among others (see \citet{DBLP:journals/aiopen/ZhouCHZYLWLS20} for an overview).
This growing adoption of GNNs comes with certain challenges. The use of GNNs in safety-critical 
applications sparks a significant demand for safety certifications, given by dependable verification
methods. Furthermore, human understandable explanations for the behaviour of GNNs are needed in order
to build trustworthy applications, coming with the need for a general understanding of the capabilities and 
limitations inherent in specific GNN models. 

In general, there are two approaches enabling rigorous and formal reasoning about GNNs: formal verification \cite{HuangKRSSTWY20} and formal explanation methods \cite{formalXAI} 
for neural network models such as GNNs. 
Formal verification procedures are usually concerned with the sound and complete verification 
of properties like ``Does GNN $\aGNN$ produce an unwanted output $y$ in some specified region $Y$?'' (reachability)
or ``Does GNN  $\aGNN$ behave as expected on inputs from some specified region $X$?'' (robustness). 
Formal explainability methods are concerned with giving answers for questions like 
``Is there a minimal, humanly interpretable reason that GNN  $\aGNN$ produces output $y$ given some input $x$?'' (abductive explanations).  
In both cases, formal verification and formal explanation, logical reasoning offers an all-purpose tool. 
For example, the following algorithm enables addressing correspondence questions:
given some GNN $\aGNN$, produce a logical formula $\phi$ such that $\semanticsof{\aGNN} = \semanticsof{\phi}$ 
where $\semanticsof \aGNN$ is the class of (pointed) graphs recognized by the GNN $\aGNN$, 
and $\semanticsof \phi$ is the class of pointed graphs in 
which $\phi$ holds. Informally put, the goal is to compute a formula $\phi$ that completely characterizes the class 
of (pointed) graphs recognized by GNN $\aGNN$. Given this, one can then investigate GNNs purely based on 
this logical characterization. 
Unfortunately, the synthesis of a formula of a logic, say first-order logic (FO) or modal logic (ML), that captures a 
semantic condition can be notoriously challenging (e.g., \citet{DBLP:conf/aaai/PinchinatRS22}).

\begin{figure}
	\begin{center}

		\begin{tikzpicture}[fill=lightgray, yscale=0.3]
                        \draw (2.5,0) circle (2.25);                        
                        \fill (2.5,0) circle (2.25) (4,0) node {
                        \begin{minipage}{1cm}
                        $\logicKsharpone\\ 
                         \text{GNN}$
                        \end{minipage}};
                        \draw[draw=none] (1.5,0) circle (1.25) (2.25,0) node {GML};
                        \draw (1,0) circle (0.75) node {ML};
			\draw (0,0) ellipse (3 cm and 2.5 cm) (-2,0) node {FO};
		\end{tikzpicture}
	\end{center}
	\caption{Expressivity of our logic $\logicKsharpone$ compared to modal logic (ML), graded modal logic (GML) and first-order logic (FO).\label{figure:expressivity}}
\end{figure}
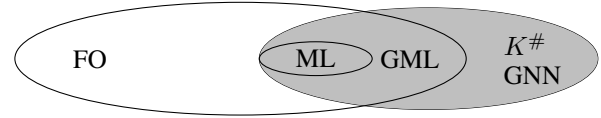

\citet{barcelo_logical_2020} have shown that any formula of Graded Modal Logic (GML) can be transformed into a GNN. Conversely, they have shown that given a GNN \emph{that is expressible in first-order logic} (FO), there exists an equivalent formula in GML. Doing so, they also characterized GML as being the intersection of FO and GNNs. While their result is promising, there is no full logical characterization of what a GNN can express.

That is why we define a logic called $\logicKsharpone$ combining counting modalities and linear programming, 
which is expressive enough to capture a broad and natural class of GNNs. 
As pictured in Figure~\ref{figure:expressivity}, it is more expressive than graded 
modal logic \cite{MDR_noteongraded} and, thus, $\logicKsharpone$ captures a broader class of GNNs than 
previously identified in \citet{barcelo_logical_2020}. Furthermore, we show that the satisfiability problem of 
$\logicKsharpone$ is PSPACE-complete, leading to immediate complexity results for various formal verification
and explainability problems as reachability, robustness, or producing abductive explanations.

\paragraph{Overview of the Main Contributions.}
\label{section:overview}
We present the logic $\logicKsharpone$ which captures Aggregate-Combine Graph Neural Networks,
also called Message Passing Neural Networks \cite{barcelo_logical_2020,GilmerSRVD17},
where the aggregation function is the sum, the combination and final classification functions are linear functions with integer parameters, and truncated ReLU ($max(0, min(1, x))$) is used for the activation function.
We refer to these GNNs simply as GNNs in the paper.
In particular, we show that:
\begin{itemize}
	\item for each formula $\varphi$ of $\logicKsharpone$ there is a GNN $\aGNN$	recognizing exactly the same pointed graphs as $\varphi$ (Theorem~\ref{thm:fromLogictoGNN}),
	\item for each GNN $\aGNN$ there is a formula $\varphi$ of $\logicKsharpone$ 
	recognizing exactly the same pointed graphs as $\aGNN$ (Theorem~\ref{thm:fromGNNtoLogic}).
\end{itemize}
These results significantly extend the class of GNNs for which a logical characterization is known from \citet{barcelo_logical_2020}.
Furthermore, we provide an algorithm for the satisfiability problem of $\logicKsharpone$, proving that the problem is 
PSPACE-complete (Theorem~\ref{theorem:pspacecompletenessKsharpone}). 
This also provides algorithmic solutions for the following problems:
given a GNN $\aGNN$ and a logical formula $\phi$,  decide whether (\Pone) $\semanticsof{\aGNN} = 
\semanticsof{\phi}$, (\Ptwo) $\semanticsof{\aGNN} \subseteq \semanticsof{\phi}$, (\Pthree) $\semanticsof{\phi} \subseteq 
\semanticsof{\aGNN}$, (\Pfour) $\semanticsof{\phi} \cap \semanticsof{\aGNN} \neq \emptyset$.

\begin{example}\label{ex-music-tuba}
	Consider a setting where GNNs are used to classify users in a social network.
	Assume we have a GNN $\aGNN$ which is \emph{intended} to recommend exactly those users who have 
	at least one friend who is a musician and at most one-third of their friends play the tuba. 
	We call this the \emph{few-tubas} property.
	Given our translation from $\aGNN$ to a $\logicKsharpone$ formula, we can answer questions like 
	\begin{description}
		\item[A]\label{item:A} ``Is each recommended person a person that has the few-tubas property?''
		\item[B] ``Does any recommended person not have the few-tubas property?''
		\item[C] ``Is any person that befriends a musician but also too many tuba players recommended?''
		\item[D] ``Is it possible to recommend a person that does not have the few-tubas property?''
	\end{description}
	by representing the corresponding properties as $\logicKsharpone$ formulas and then solving problem \Pone 
	for \emph{\textbf{A}}, \Ptwo for \emph{\textbf{B}}, \Pthree for \emph{\textbf{C}} and \Pfour for \emph{\textbf{D}}. Question \emph{\textbf{A}} corresponds to giving an abductive explanation, 
	as a positive answer would indicate that ``at least one musician friend and not too many tuba playing friends''
	is a (minimal) reason for 
	``recommended person''. In the same manner, \emph{\textbf{B}} and \emph{\textbf{C}} are reachability properties and \emph{\textbf{D}} is a robustness 
	property.
\end{example}

\paragraph{Outline.}
In Section~\ref{section:background} we recall the necessary preliminaries on graph neural networks.
In Section~\ref{section:logic}, we define the logic $\logicKsharpone$. In Section~\ref{section:correspondence}, we study the correspondence between GNNs and $\logicKsharpone$. In Section~\ref{section:complexitylogic}, we discuss the satisfiability problem of $\logicKsharpone$.  Section~\ref{section:reasoningaboutGNN} addresses the complexity of the problems \Pone--\Pfour. Section~\ref{section:relatedwork} is about the related work, and Section~\ref{section:conclusion} concludes.

\section{Background on GNNs}
\label{section:background}
\label{section:GNN}

\newcommand{\dimensionstate}{d}
\newcommand{\nblayers}{L}
\newcommand{\layer}{\mathcal L}
\newcommand{\AGG}{\mathit{agg}}
\newcommand{\COMB}{\mathit{comb}}
\newcommand{\CLS}{\mathit{cls}}

\tikzstyle{vertex} = [circle,draw, inner sep=0mm,minimum height=3mm,font=\tiny]
\tikzstyle{processarrow} = [line width = 1mm, -latex]

\newcommand{\tikzexamplegraphm}[5]
{
	\begin{tikzpicture}[xscale=1, yscale=0.8]
		\node[vertex, #2] (u) at (-1, 0) {};
		\node[vertex,  fill=blue!20!orange] (v) at (-0.5, 1) {#3};
		\node at (-0.5, 1.4) {$u$};
		\node[vertex,  #4] (w) at (0, 0) {};
		\node[vertex] (y) at (-1, -1) {#5};
		\node at (0, -1) {$#1$};
		\draw[->] (v) edge (u);
		\draw[->] (v) edge (w);
		\draw[->] (u) edge (y);
	\end{tikzpicture}
}

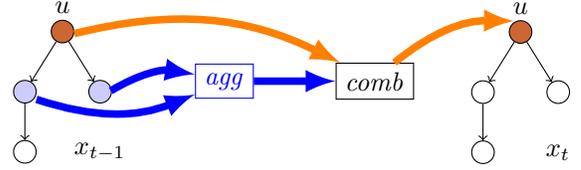
\begin{figure}
	\centering
	\begin{tikzpicture}[node distance=20mm]
		\node (gs) at (-0.35, 0.2) {\tikzexamplegraphm{x_{t-1}}{fill=blue!20!white}{}{fill=blue!20!white}{}};
		\node[draw, right of=gs, blue] (AGG) {$\AGG$};
		\draw[processarrow, blue] (u) edge[bend right=20] (AGG);
		\draw[processarrow, blue] (w) edge[bend left=20] (AGG);
		\node[draw, right of=AGG] (COMB) {$\COMB$};
		\draw[processarrow, blue] (AGG) edge[bend left=0] (COMB);
		\draw[processarrow, orange] (v) edge[bend left=20] (COMB);
		\node[right of=COMB] (g1) {\tikzexamplegraphm{x_t}{}{}{}{}};
		\draw[processarrow, orange] (COMB) edge[bend left=20] (5.5, 1);
	\end{tikzpicture}
	\caption{A layer in a GNN transforms the state $x_{t-1}$ at step $t-1$ into the state $x_t$ at time $t$. The figure shows how $x_t(u)$ is computed. First, the function $\AGG$ is applied to the features in the successors of $u$. Then $\COMB$ is applied to that result and $x_{t-1}(u)$ to obtain $x_t(u)$.}
	\label{figure:layer}
\end{figure}

In this paper, we consider Aggregate-Combine GNNs (AC-GNN) \cite{barcelo_logical_2020}, also sometimes called message passing neural
networks \cite{GilmerSRVD17}. 

An \emph{aggregation function} $\AGG$ is a function mapping finite multisets of vectors in $\mathbb{R}^{\dimensionstate}$ to vectors in $\mathbb{R}^{\dimensionstate}$.
A \emph{combination function} $\COMB$ is a function mapping a vector in $\mathbb{R}^{2\dimensionstate}$ to vectors in $\mathbb{R}^{d'}$.
An \emph{AC-GNN layer} $\layer$ of input dimension $d$ is defined by an aggregation function $\AGG$ and a combination function $\COMB$ of matching dimensions, meaning $\AGG$ expects and produces vectors from $\mathbb{R}^d$ and $\COMB$ has input dimensionality $2d$. 
The output dimension of $\layer$ is given by the output dimension of $\COMB$.
Then, an \emph{AC-GNN} is a tuple $(\layer^{(1)}, \dotsc, \layer^{(\nblayers)}, \CLS)$ where $\layer^{(1)}, \dotsc, \layer^{(\nblayers)}$ are 
$\nblayers$ AC-GNN layers and $\CLS : \setR^\dimensionstate \rightarrow \set{0, 1}$ is a \emph{classification function}. We assume that all GNNs are well-formed in the sense that
output dimension of layer $\layer^{(i)}$ matches input dimension of layer $\layer^{(i+1)}$ as well
as output dimension of $\layer^{(L)}$ matches input dimension of $\CLS$.

\begin{definition}\label{def:our-GNNs}
Here we call a \emph{GNN} an AC-GNN $\aGNN$ where all aggregation functions are given by $\AGG (X) = \sum_{x \in X} x$, 
all combination functions are given by $\COMB(x,y) = \sigmabold(xC+yA +b)$ where $\sigmabold(x)$ is the componentwise application of the \emph{truncated ReLU} $\sigma(x) = max(0, min(1, x))$, where $C$ and $A$ are matrices of integer parameters and $b$ is a vector of integer parameters, and where the classification function is $\CLS(x) = \sum_i a_i x_i \geq 1$, with $a_i$ integers.
\end{definition}

A \emph{(labeled directed) graph} $G$ is a tuple $(\setvertices, \setedges, \labeling)$ such that $\setvertices$ is a finite set of vertices, $\setedges \subseteq \setvertices \times \setvertices$ a set of directed edges and $\labeling$ is a mapping from~$\setvertices$ to a valuation over a set of atomic propositions. We write  $\labeling(u)(p) = 1$ when atomic proposition $p$ is true in $u$, and $\labeling(u)(p) = 0$ otherwise. Given a graph $G$ and vertex $u \in \setvertices$, we call $(G,u)$ a \emph{pointed graph}.

Let $G = (\setvertices, \setedges, \labeling)$ be a graph. A \emph{state} $x$ is a mapping from $\setvertices$ into $\setR^\dimensionstate$ for some $\dimensionstate$. 
When applied to a graph $G$, the $t$-th GNN layer $\layer^{(t)}$ transforms the previous state $\statet {t-1}$ into the next state  $\statet {t}$.
Supposing that the atomic propositions occurring in $G$ are $p_1, \dotsc, p_k$, the \emph{initial state} $x_0$ is defined by: $$x_0(u) := (\labeling(u)(p_1), \dots, \labeling(u)(p_k)) \in \setR^d$$ for all $u \in V$.
Then:
$$\statetv{t}u := \COMB (\statetv {t-1} u,\AGG(\multiset{\statetv {t-1} v | uv \in \setedges}))$$
\noindent
where $\AGG$ and $\COMB$ are respectively the aggregation and combination function of the $t$-th layer. 
In the above equation, note that the argument of $\AGG$ is the multiset of the feature vectors of the successors of $v$. Thus, the same vector may occur several times in that multiset. Figure~\ref{figure:layer} explains how a layer works at each vertex.

\begin{example}
	Consider a layer defined by $\AGG (X) := \sum_{x \in X} x$ and 	
	$$\COMB((x, x'),(y, y')) := \left(\begin{matrix}
		\sigma(x + 2x' + 3y + 4y' + 5) \\
		\sigma(6x + 7x' + 8y + 9y' + 10) \\
	\end{matrix}\right).$$
Suppose as in Figure~\ref{figure:layer},
 that vertex $u$ has two successors named $v$ and $w$. 
 Here we suppose that feature vectors are of dimension~2: 
 $x_{t-1}(u), x_{t-1}(v), x_{t-1}(w) \in \setR^2$.
 First, $\AGG (\set{\set{x_{t-1}(v), x_{t-1}(w)}}) = x_{t-1}(v) + x_{t-1}(w)$. Second, with our example of combination function we get:
 $$x_t(u) := \left(\begin{matrix}
 	\sigma(x_{t-1}(u)_1 + 2x_{t-1}(u)_2 + 3y_1 + 4y_2 + 5) \\
 	\sigma(6x_{t-1}(u)_1 + 7x_{t-1}(u)_2 + 8y_1 + 9y_2 + 10) \\
 \end{matrix}\right) $$
where $y_1 =  x_{t-1}(v)_1 + x_{t-1}(w)_1$ and
$y_2 =  x_{t-1}(v)_2 + x_{t-1}(w)_2$.
\end{example}

Figure~\ref{figure:gnn} explains how a GNN works overall: the state is updated at each layer; at the end the function $\CLS$ says whether each vertex is recognized (output 1) or not (output 0).

\newcommand{\tikzexamplegraph}[5]
{
	\begin{tikzpicture}[scale=0.5]
		\pgfmathparse{70*rnd+30}
		\edef\tmp{\pgfmathresult}
		\node[vertex, fill=white!\tmp!black] (u) at (-1, 4) {#2};
		\pgfmathparse{70*rnd+30}
		\edef\tmp{\pgfmathresult}
		\node[vertex,fill=white!\tmp!black] (v) at (-0.5, 5) {#3};
		\pgfmathparse{70*rnd+30}
		\edef\tmp{\pgfmathresult}
		\node[vertex,fill=white!\tmp!black] (w) at (0, 4) {#4};
		\pgfmathparse{70*rnd+30}
		\edef\tmp{\pgfmathresult}
		\node[vertex,fill=white!\tmp!black] (y) at (-1, 3) {#5};
		\node at (0, 3) {$#1$};
		\draw[->] (v) edge (u);
		\draw[->] (v) edge (w);
		\draw[->] (u) edge (y);
	\end{tikzpicture}
}

\newcommand{\tikzexamplegraphoutput}[5]
{
	\begin{tikzpicture}[scale=0.5]
		\node[vertex] (u) at (-1, 4) {#2};
		\node[vertex] (v) at (-0.5, 5) {#3};
		\node[vertex] (w) at (0, 4) {#4};
		\node[vertex] (y) at (-1, 3) {#5};
		\node at (0, 3) {$#1$};
		\draw[->] (v) edge (u);
		\draw[->] (v) edge (w);
		\draw[->] (u) edge (y);
	\end{tikzpicture}
}

\newcommand{\tikzlayer}[1]{layer #1}

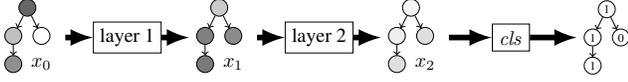
\begin{figure}
	\centering
	\scalebox{0.75}{
	\begin{tikzpicture}[node distance=17mm]
		\node (g0) {\tikzexamplegraph{x_0}{}{}{}{}};
		\node[draw, right of=g0] (l1) {\tikzlayer{1}};
		\node[right of=l1]  (g1) {\tikzexamplegraph{x_1}{}{}{}{}};
		\node[draw, right of=g1] (l2) {\tikzlayer{2}};
		\node[right of=l2]  (g2) {\tikzexamplegraph{x_2}{}{}{}{}};
		\node[draw, right of=g2] (l3) {$\CLS$};
		\node[right of=l3]  (g3) {\tikzexamplegraphoutput{}{1}{1}{0}{1}};
		\draw[processarrow] (g0) -- (l1);
		\draw[processarrow] (l1) -- (g1);
		\draw[processarrow] (g1) -- (l2);
		\draw[processarrow] (l2) -- (g2);
		\draw[processarrow] (g2) -- (l3);
		\draw[processarrow] (l3) -- (g3);
	\end{tikzpicture}}
	\caption{General idea of a GNN with 2 layers applied on a graph with 4 vertices.}
	\label{figure:gnn}
\end{figure}

Let $\aGNN = (\layer^{(1)},\ldots, \layer^{(\nblayers)}, \CLS)$ be a GNN. We define the semantics $\semanticsof \aGNN$ of $\aGNN$ as the set of pointed graphs $(G, u)$ 
such that $\CLS(x_\nblayers(u)) = 1$.

\section{The Logic $\logicKsharpone$}
\label{section:logic}

In this section, we describe the syntax and semantics of $\logicKsharpone$ and its fragment $\logicKsharp$. 

\subsection{Syntax}

\newcommand{\NTexpression}{\xi}

Consider a countable set $\Ap$ of propositions. We define the language of logic $\logicKsharpone$ as the set of formulas generated by the following BNF:
\begin{align*}
	\phi & ::= p \mid \lnot \phi \mid \phi \lor \phi \mid \NTexpression \geq 0 \\ 
	\NTexpression & ::= c \mid \istrue\phi \mid \modalitynumber \phi \mid \NTexpression + \NTexpression \mid c\times \NTexpression 
\end{align*}
where $p$ ranges over $\Ap$, and $c$ ranges over $\mathbb Z$.
Atomic formulas are propositions $p$, inequalities and equalities of linear expressions. We consider linear expressions over $\istrue\phi$ and $\modalitynumber \phi$. The number $\istrue\phi$ is equal to $1$ if $\phi$ holds in the current world and equal $0$ otherwise. The number $\modalitynumber \phi$ is the number of successors in which $\phi$ hold. The language seems strict but we write $\NTexpression_1 \leq \NTexpression_2$ for $\NTexpression_2 - \NTexpression_1 \geq 0$, $\NTexpression = 0$ for $(\NTexpression \geq 0) \land (-\NTexpression \geq 0)$, etc.
Recall that modal logic itself extends propositional logic with a modal construction $\lbox \phi$ whose semantics is `the formula holds in all successors'. Logic $\logicKsharpone$ is an extension of modal logic since we can define $\lbox \phi :=( \modalitynumber (\lnot \phi) \leq 0)$: the number of successors in which $\phi$ does not hold equals $0$. We write $\top$ for a tautology.

\begin{example}
The few-tuba property of \Cref{ex-music-tuba}, users (nodes) who have at least one friend who is a musician and at most one-third of their friends play the tuba, can be represented by the formula
	$(\modalitynumber musician \geq 1) \land (\modalitynumber \top \geq 3 \times \modalitynumber tubaplayer)$.
\end{example}

\newcommand{\subformulasof}[1]{sub(#1)}
The set of subformulas, $\subformulasof{\phi}$ is defined by induction on~$\phi$:
$\subformulasof p  = \set{p}$, 
$\subformulasof {\lnot \phi}  = \set{\lnot \phi} \union \subformulasof{\phi}$,
$\subformulasof{\phi \lor \psi} = \set{\phi \lor \psi} \union \subformulasof \phi \union \subformulasof \psi$, and
$\subformulasof{\NTexpression \geq 0} = \set{\NTexpression \geq 0} \union \bigcup \set{\subformulasof\psi \mid \text{$1_\psi$ or $\modalitynumber \psi$ in $\NTexpression$}}$.

\newcommand{\modaldepthof}[1]{md(#1)}
The modal depth of a formula, $\modaldepthof{\phi}$ and the modal depth of an expression, $\modaldepthof{\NTexpression}$ are defined by mutual induction on $\phi$ and $\NTexpression$:
$\modaldepthof p = \modaldepthof{c}  = 0$,
$\modaldepthof {\lnot \phi} = \modaldepthof {1_\phi} = \modaldepthof{\phi}$,
$\modaldepthof{\NTexpression \geq 0} = \modaldepthof {k\cdot \NTexpression}   = \modaldepthof{\NTexpression}$,
$\modaldepthof {\modalitynumber \phi} = \modaldepthof{\phi} + 1$,
and
$\modaldepthof {\NTexpression_1 + \NTexpression_2} = \max(\modaldepthof {\NTexpression_1},\modaldepthof {\NTexpression_2})$.
As in modal logic, modalities are organized in levels.
\begin{example}
	$\modaldepthof{1_{p \land \modalitynumber {q} \leq 4} \leq \modalitynumber{((\modalitynumber p \geq 2}) \leq 4)} = 2$.
	The expressions $\modalitynumber {q}$ and $\modalitynumber{(\modalitynumber p \geq 2})$ are at the root level (level~$1$), while the expression $\modalitynumber p$ is at level~$2$.
\end{example}

In this paper, a formula is represented by a DAG (directed acyclic graph) instead of just a syntactic tree. DAGs, contrary to syntactic trees, allow the reuse of subformulas.

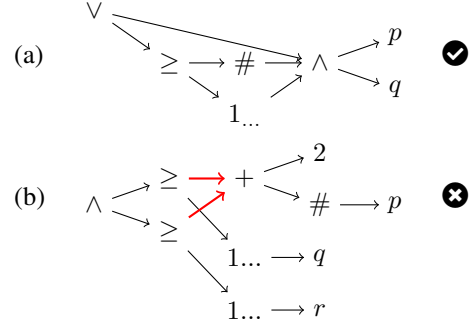
\begin{figure}
	\begin{center}
			\begin{tabular}{lll}
				(a) & 
		\begin{tikzpicture}[yscale=0.7, baseline=0mm]
                \node (lor) at (-1,1) {$\lor$};                                                
			\node (geq) at (0, 0) {$\geq$};
			\node (modalitynumber) at (1, 0) {$\modalitynumber$};
			\node (1) at (1, -1) {$1_{...}$};
			\node (land) at (2, 0) {$\land$};
			\node (p) at (3, 0.5) {$p$};
			\node (q) at (3, -0.5) {$q$};
			\draw[->] (geq) -- (modalitynumber);
			\draw[->] (geq) -- (1);
			\draw[->] (modalitynumber) -- (land);
			\draw[->] (1) -- (land);
			\draw[->] (land) -- (p);
			\draw[->] (land) -- (q);

                \draw[->] (lor) -- (land);
                \draw[->] (lor) -- (geq);
		\end{tikzpicture}
                & \faCheckCircle  \\
	(b) & 
	\begin{tikzpicture}[yscale=0.7, baseline=0mm]
		\node (1) at (0, 0) {$\land$};
		\node (2) at (1, 0.5) {$\geq$};
		\node (3) at (1, -0.5) {$\geq$};
		\node (4) at (2, 0.5) {$+$};
		\node (5) at (3, 1) {$2$};
		\node (6) at (3, 0) {$\#$};
		\node (7) at (4, 0) {$p$};
		\draw[->] (4) -- (6);
		\draw[->] (4) -- (5);
		\node (8) at (2, -1) {$1...$};
		\node (9) at (2, -2) {$1...$};
		\node (10) at (3, -1) {$q$};
		\node (11) at (3, -2) {$r$};
		\draw[->] (2) edge (8);
		\draw[->] (3) edge (9);
		\draw[->] (8) edge (10);
		\draw[->] (9) edge (11);
		\draw[->] (6) edge (7);
		\draw[->] (1) -- (2);
		\draw[->] (1) -- (3);
		\draw[->, red, thick] (2) -- (4);
		\draw[->, red, thick] (3) -- (4);
	\end{tikzpicture}  & \faTimesCircle 

	\end{tabular}
\end{center}
	\caption{DAG representation of formulas. (a)~We allow for reusing subformulas. (b)~We disallow for reusing arithmetical expressions.\label{figure-dag}}
\end{figure}

\begin{example}
The DAG depicted in Figure~\ref{figure-dag}(a), in which the subformula $p \land q$ is used thrice, represents the formula
$(p \land q) \lor (\modalitynumber (p \land q) \geq 1_{p \land q})$.
\end{example}
\noindent However, we disallow DAGs reusing arithmetic expressions. 
\begin{example}
The formula $(2+\modalitynumber p \geq 1_q) \land (2+\modalitynumber p \geq 1_r)$ \emph{cannot} be represented by the DAG shown in Figure~\ref{figure-dag}(b) which refers to the expression $2+\modalitynumber p$ twice.
\end{example}
Formally, the fact that arithmetic expressions are not reused in DAGs representing formulas is reflected by the simple property that the nodes representing arithmetical expressions have in-degree~$1$.
\begin{definition}
	A DAG of a formula is a graph in which nodes for $c$, $1_{\phi}$, $\modalitynumber \phi$, $\NTexpression + \NTexpression'$, $c \times \NTexpression$ have in-degree $1$.
\end{definition}

The reason for not allowing reusing arithmetical expressions in the DAG representation of formulas is technical. Firstly, as demonstrated by Theorem~\ref{thm:fromGNNtoLogic}, the transformation of GNNs into formulas does not necessitate the reusing of arithmetical subexpressions but only subformulas. Second, we will need to efficiently transform formulas in DAG representation into formulas in tree representation (\Cref{lemma-satKone-dag-tree}). For this result, we need DAGs wherein only subformulas are reused, excluding the reuse of arithmetical expressions.
The size $|\phi|$ of a $\logicKsharpone$ formula $\phi$ in DAG form is the number of bits needed to represent the DAG.

The logic $\logicKsharp$ is the syntactic fragment of $\logicKsharpone$ in which constructions $1_\phi$ are disallowed.

\subsection{Semantics}

\newcommand{\semanticsvalue}[2]{[[#1]]_{#2}}

As in modal logic, a formula $\phi$ is evaluated in a pointed graph $(G, u)$ (also known as pointed Kripke model). 
We define the truth conditions $(G,u) \models \phi$ ($\phi$ is true in $u$) by 
	\begin{center}
		\begin{tabular}{lll}
			$(G,u) \models p$ & if & $\labeling(u)(p) = 1$, \\
			$(G,u) \models \neg \phi$ & if & it is not the case that $(G,u) \models \phi$, \\
			$(G,u) \models \phi \land \psi$ & if & $(G,u) \models \phi$ and $(G,u) \models \psi$, \\
			$(G,u) \models \NTexpression \geq 0$ & if &  $\semanticsvalue{\NTexpression}{G,u} \geq 0$, \\
		\end{tabular}
	\end{center}
	and the semantics $\semanticsvalue{\NTexpression}{G,u}$ (the value of $\NTexpression$ in $u$) of an expression $\NTexpression$ by mutual induction on $\phi$ and $\NTexpression$ as follows.
	\begin{center}
		$\begin{array}{ll}
			\semanticsvalue{c}{G, u} & = c, \\
			\semanticsvalue{\NTexpression_1+\NTexpression_2}{G, u} & = \semanticsvalue{\NTexpression_1}{G,u}+\semanticsvalue{\NTexpression_2}{G,u}, \\
			\semanticsvalue{c \times \NTexpression}{G, u} & = c \times \semanticsvalue{\NTexpression}{G,u}, \\
			\semanticsvalue{\istrue\phi}{G, u} & = \begin{cases}
				1 & \text{if $(G,u) \models \phi$} \\
				0 & \text{else},
			\end{cases} \\  
			\semanticsvalue{\modalitynumber\phi}{G, u} & = |\{v \in \setvertices \mid (u,v) \in \setedges \text{ and } (G,v) \models \phi\}|.
		\end{array}$
	\end{center}
We illustrate it in the next example.
\begin{example}
	\begin{figure}
		\centering
		\begin{tikzpicture}[scale=1, rotate=90]
			\node[vertex] (u) at (-1, 0) {};
			\node[vertex] (v) at (-0.5, 1) {$p$};
			\node at (-0.5, 1.3) {$u$};
			\node[vertex] (w) at (0, 0) {$q$};
			\node[vertex] (y) at (-1, -1) {$p$};
			\draw[->] (v) edge (u);
			\draw[->] (v) edge (w);
			\draw[->] (u) edge (y);
		\end{tikzpicture}
		\caption{Example of a pointed graph $G, u$. We indicate true propositional variables at each vertex.}
		\label{fig:pointedgraph}
	\end{figure}
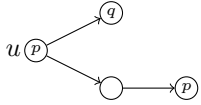
	
	Consider the pointed graph $G, u$ shown in Figure~\ref{fig:pointedgraph}. We have $G, u \models p \land (\modalitynumber \lnot p \geq 2) \land \modalitynumber (\modalitynumber p \geq 1) \leq 1$. Indeed, $p$ holds in $u$, $u$ has (at least) two successors in which $\lnot p$ holds. Moreover, there is (at most) one successor which has at least one $p$-successor.
\end{example}
We define $\semanticsof \phi$ as the set of the pointed graphs $G, u$ such that $G, u \models \phi$.
Furthermore, we say that $\phi$ is \emph{satisfiable} when there exists a pointed graph $G, u$ such that $G, u \models \phi$.

\begin{definition}
	The \emph{satisfiability problem} of $\logicKsharpone$ (resp.\ $\logicKsharp$) is: given a DAG representation of a formula $\phi$ in the language of $\logicKsharpone$ (resp.\ $\logicKsharp$), is $\phi$ satisfiable?
\end{definition}

\subsection{Relationship with Other Logics}

We already mentioned how logic $\logicKsharpone$ extends modal logic.
Logic $\logicKsharpone$ is also an extension of graded modal logic.
Graded modal logic \cite{DBLP:journals/sLogica/Fattorosi-Barnaba85} extends classical modal logic by offering counting modality constructions of the form $\Diamond^{\geq k} \phi$ which means there are at least $k$ successors in which $\phi$ holds.
Logic $\logicKsharpone$ is more expressive than graded modal logic since $\Diamond^{\geq k} \phi$ is rewritten in  $k \leq \modalitynumber \phi$.    
In fact, the expressivity of $\logicKsharpone$ goes beyond FO.

\begin{proposition}\label{example-expressivity-FO}
	There are some properties that can be expressed in $\logicKsharpone$ that cannot be expressed in FO.
\end{proposition}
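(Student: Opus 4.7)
The plan is to exhibit a concrete $\logicKsharpone$ formula whose extension is not definable in FO, and to establish non-definability by an Ehrenfeucht--Fra\"iss\'e (EF) game argument. A natural candidate expresses equicardinality of two successor-labels:
\[
\phi \;:=\; (\modalitynumber p - \modalitynumber q \geq 0) \land (\modalitynumber q - \modalitynumber p \geq 0),
\]
using the paper's syntactic sugar ($\NTexpression_1 - \NTexpression_2$ standing for $\NTexpression_1 + (-1)\times \NTexpression_2$). This formula holds at a vertex $u$ exactly when $u$ has the same number of $p$-successors as $q$-successors, which is the textbook example of a property beyond FO over finite structures.

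For each quantifier rank $k$, I will build two pointed graphs $(G_n, u)$ and $(H_n, u)$ with $n$ chosen sufficiently large (say $n > k$), where in $(G_n,u)$ the root $u$ has $n$ distinct $p$-labelled successors and $n$ distinct $q$-labelled successors (no other edges or labels), and where $(H_n, u)$ is identical except that $u$ has $n+1$ rather than $n$ $q$-labelled successors. Then $(G_n,u) \models \phi$ while $(H_n,u) \not\models \phi$, yet the two structures look nearly the same from the FO perspective.

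The heart of the argument is a standard winning strategy for Duplicator in the $k$-round EF game with $u$ pre-pebbled on both sides: when Spoiler plays $u$, respond with $u$; when Spoiler plays an unpebbled $p$-successor (resp.\ $q$-successor) on one side, reply with any unpebbled $p$-successor (resp.\ $q$-successor) on the other side. Since $n > k$, a fresh successor of the correct label is always available, so the pebbled vertices form a partial isomorphism throughout the game. Consequently $(G_n,u)$ and $(H_n,u)$ agree on all FO sentences of quantifier rank at most $k$; as $k$ is arbitrary, no FO formula can define $\semanticsof\phi$, and the proposition follows.

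The only delicate point, and the main thing to be careful about, is verifying that the described matching really does preserve the partial isomorphism at every stage (labels agree, edges incident to $u$ agree, and there are no edges between any two successors on either side). Because both structures are highly homogeneous ``bouquets'' rooted at $u$, this verification reduces to a straightforward pigeonhole check on the available unpebbled successors of each label.
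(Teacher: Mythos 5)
Your proposal is correct and takes essentially the same route as the paper: the paper uses the property $\modalitynumber p \geq \modalitynumber q$ with bouquet graphs $A_n$ ($n$ $p$- and $n$ $q$-successors) versus $B_n$ ($n$ $p$- and $n{+}1$ $q$-successors) and a pigeonhole Ehrenfeucht--Fra\"iss\'e strategy for Duplicator, exactly mirroring your equicardinality formula and $(G_n,u)$ vs.\ $(H_n,u)$. The only cosmetic difference is that the paper handles the free variable by reducing to the sentence $\forall x\,\phi(x)$ and playing the unpointed game, whereas you play the game with the distinguished vertex pre-pebbled; both are standard and equivalent.
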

\begin{proof}
	The property `there are more $p$-successors than $q$-successors' can be expressed in logic $\logicKsharpone$ by the formula $\modalitynumber p \geq \modalitynumber q$, but cannot be expressed in first-order logic, and thus not in graded modal logic.
	This is proven via an Ehrenfeucht-Fra\"iss\'e game (see \citeappendix).
\end{proof}

\section{Correspondence}
\label{section:correspondence}
In this section, we lay the foundations for expressing $\logicKsharpone$-formulas using GNNs,  and vice versa.

\subsection{From Logic to GNNs}

We start by showing that each $\logicKsharpone$-formula is captured by some GNN. The proof follows the same line as 
Prop 4.1 in \citet{barcelo_logical_2020}). However, our result
is a generalization of their result since $\logicKsharpone$ is more expressive than graded modal logic.

\newcommand{\existsC}[1]{\exists^{\geq#1}}

\begin{theorem}\label{thm:fromLogictoGNN}
	For every $\logicKsharpone$-formula $\phi$, 
	we can compute in polynomial time wrt.\ $|\phi|$ a GNN $\aGNN_{\phi}$ such that $\semanticsof \phi= \semanticsof{\aGNN_{\phi}}$.
\end{theorem}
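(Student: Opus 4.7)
The plan is to construct, by induction on the DAG of $\phi$, a GNN $\aGNN_\phi$ whose state has one dimension per subformula of $\phi$, with the invariant that after enough layers, the dimension associated to subformula $\psi$ holds the value $1$ at vertex $u$ if $(\graph,u) \models \psi$ and $0$ otherwise. First I enumerate the subformulas of $\phi$ as $\psi_1, \ldots, \psi_n$ in topological order of the DAG, with the propositions listed first, and take the state dimension to equal $n$. The initial state $\statet{0}(u)$ then encodes the labelling, with the dimension for a proposition $p$ set to $\labeling(u)(p)$ and the rest set to $0$. Throughout, I rely on the fact that $\sigma$ applied to an integer returns $0$ or $1$, so a straightforward induction shows that all state entries stay in $\{0, 1\}$.

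At layer $t$, I compute in parallel the features for every subformula at DAG-depth $t$, while copying the features of strictly earlier subformulas through unchanged, using an identity block in the matrix $C$ of COMB. The Boolean cases are direct: for $\psi_i = \lnot \psi_j$, I set the new entry to $\sigma(1 - \statetv{t-1}{u}_j)$; for $\psi_i = \psi_j \lor \psi_k$, I set it to $\sigma(\statetv{t-1}{u}_j + \statetv{t-1}{u}_k)$. The crucial case is $\psi_i = (\NTexpression \geq 0)$. Since expression nodes have in-degree $1$ in the DAG, I can distribute $+$ and $\cdot$ to unfold $\NTexpression$ into a single integer-coefficient linear combination $\sum_j \alpha_j\, \istrue{\psi_j} + \sum_j \beta_j\, \modalitynumber \psi_j + \gamma$, where every $\psi_j$ involved has DAG-depth strictly less than $t$. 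The terms $\istrue{\psi_j}$ are exactly the current-node features $\statetv{t-1}{u}_j$, accessible through $xC$; the terms $\modalitynumber \psi_j = \sum_{uv \in \setedges} \statetv{t-1}{v}_j$ are exactly the sum-aggregated features, accessible through $yA$; and $\gamma$ goes into the bias $b$. I then output $\sigma(\NTexpression + 1)$, which equals $1$ iff $\NTexpression \geq 0$ precisely because $\NTexpression$ is integer-valued (if $\NTexpression \geq 0$ then $\NTexpression+1 \geq 1$; if $\NTexpression < 0$ then $\NTexpression \leq -1$, so $\NTexpression+1 \leq 0$).

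The classification function is $\CLS(x) = x_{n'} \geq 1$, where $n'$ is the index of $\phi$ itself, yielding $\semanticsof{\aGNN_\phi} = \semanticsof{\phi}$. For the polynomial-time bound, the number of layers equals the DAG depth, which is bounded by $|\phi|$; each layer's matrices have dimension $O(|\phi|) \times O(|\phi|)$, and their integer entries are derived from constants of $\phi$ already given in binary in the DAG. I expect the main obstacle to be showing that the restrictive COMB format of \Cref{def:our-GNNs} suffices for the inequality case in a single linear-plus-truncated-ReLU step. This is what the combination of (i) the in-degree-$1$ restriction on expression nodes, which guarantees that $\NTexpression$ can be flattened into a single linear combination without needing intermediate outputs, and (ii) the integrality argument for $\sigma(\NTexpression + 1)$ is designed to handle.
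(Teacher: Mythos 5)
Your proposal is correct and follows essentially the same construction as the paper: one state dimension per subformula, an identity block to carry forward already-computed entries, $\sigma(1-x)$ and $\sigma(x+y)$ for the Boolean connectives, and the integrality-based $\sigma(\NTexpression+1)$ trick (the paper's $b_i=-c+1$) for inequalities after flattening each arithmetic expression into a single linear combination. The only cosmetic difference is that you schedule one layer per DAG-depth level whereas the paper uses one layer per subformula with identical combination functions throughout; both yield polynomially many layers and the same correctness invariant.
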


\begin{proof}
	\newcommand*{\geqdepth}{\mathit{geq}}
	Let $\phi$ be a $\logicKsharpone$ formula with occurring propositions $p_1, \dotsc, p_m$. 
	Let $(\phi_1, \dotsc, \phi_n)$ be an enumeration of the subformulas of $\phi$ such that $\phi_i = p_i$ for all 
	$i \leq m$, if $\phi_i \in \subformulasof{\phi_j}$ then $i \leq j$ and $\phi_n = \phi$.
	W.l.o.g.\ we assume that all $\NTexpression \geq 0$ subformulas are of the form $\sum_{j \in J} k_j \times 1_{\phi_j} + \sum_{j' \in J'} k_{j'} \times \modalitynumber{\phi_{j'}} - c \geq 0$
	for some index sets $J, J' \subseteq \{1, \dotsc, n\}$.
	We build $\aGNN_{\phi}$ in a stepwise fashion. Since $\aGNN_{\phi}$ is a
	GNN it is completely defined by specifying the combination function $\COMB_i$ for each layer $l_i$ with $i \in \{1, \dotsc, n\}$ 
	of $\aGNN_{\phi}$ as well as the classification function. 
	
	The output dimensionality of each $\COMB_i$ is $n$. Let $\COMB_1$, namely the first 
	combination in $\aGNN_{\phi}$, have input dimensionality
	$2m$ computing $\COMB_1(x, y) = (x, 0, \dotsc, 0)$.
	Informally, this ensures that state $x_1$ has dimensionality $n$ where the first $m$ dimensions
	correspond to the propositions $p_i$ and all other are $0$.
	Let $\COMB_i(x,y) = \sigmabold(xC + yA + b)$ where $C$, $A$ are $n \times n$ and $b$ is $n$ dimensional and specified as follows.
	All cells of $C$, $A$ and $b$ are zero except for the case $C_{ii} = 1$ if $ i \leq m$, the case $C_{ji} = -1, b_i = 1$ if $\varphi_i = \neg\varphi_j$, 
	the case $C_{ji} = C_{li} = 1$ if $\varphi_i = \varphi_j \lor \varphi_l$ and the case $C_{ji} = k_j, A_{j'i} = k_{j'}, b_i = -c + 1$ for all $j \in J, j' \in J'$ if 
	$\varphi_i = \sum_{j \in J} k_j \times 1_{\phi_j} + \sum_{j' \in J'} k_{j'} \times \modalitynumber{\phi_{j'}} \geq c$. This means that
	all $\COMB_i$ are equal. The classification function $\CLS$ is given by $\CLS(x) = x_n \geq 1$.

The correctness of the construction is proven in \citeappendix.
\end{proof}

	 Furthermore, the number of layers of an equivalent GNN can be reduced to be proportional to $md(\phi)$. In that case, however, the corresponding transformation is a priori not computable in poly-time (see \citeappendix).
	 
\begin{example}
	Consider the formula $\phi = \neg(p \vee (8 \leq 3\times \modalitynumber q))$.
	We define the following GNN $\aGNN$ with 2 layers which is equivalent to $\phi$ as follows. We first consider the following subformulas of $\phi$ in that order: 
	\begin{center}
        \setlength{\tabcolsep}{10pt} 
		\begin{tabular}{ccccc}
                \toprule
			$\phi_1$ & $\phi_2$ & $\phi_3$ & $\phi_4$ & $\phi_5$ \\
			\midrule
			$p$ &  $q$ & $8 \leq 3 \times \modalitynumber q$ &  $\phi_1 \lor \phi_3$ &  $\lnot \phi_4$ \\
                        \bottomrule
		\end{tabular}
        \setlength{\tabcolsep}{6pt} 
	\end{center}
	
	The combination function for both layers is $\COMB(x,y) = \sigmabold(xC+yA+b) $ where
	\begin{align*}
		C = \begin{pmatrix}
			1 & 0 & 0 & 1 & 0\\
			0 & 1 & 0 & 0 & 0\\
			0 & 0 & 0 & 1 & 0\\
			0 & 0 & 0 & 0 & -1\\
			0 & 0 & 0 & 0 & 0 
		\end{pmatrix},
		&&
		A = \begin{pmatrix}
			0 & 0 & 0 & 0 & 0\\
			0 & 0 & 3 & 0 & 0\\
			0 & 0 & 0 & 0 & 0\\
			0 & 0 & 0 & 0 & 0\\
			0 & 0 & 0 & 0 & 0 
		\end{pmatrix} \\ 
                \textit{and } b = \begin{pmatrix}
			0 & 0 & -7 & 0 & 1\\
		\end{pmatrix}.~~~~
	\end{align*}
	The role of the two first column coordinates are to keep the current values of $p$ and $q$ in 
	each vertex of the graph (hence the identity submatrix in $C$ wrt the first two columns and 
	rows).

	The third column is about subformula $8 \leq 3 \times \modalitynumber q$. The coefficient 3 in $A$ is at the second row (corresponding to $q$).
	 The third column of $xC + yA + b$ is equal to $3 \times \text{number of $q$-successors} - 7$. The third column of $\sigmabold(xC+yA+b) $ equals $1$ exactly when $3 \times \text{number of $q$-successors} - 7 \geq 1$, i.e.\ exactly when $8 \leq 3 \times \modalitynumber q$ should hold.
	The fourth column handles the disjunction of $\phi_1 \lor \phi_3$, hence the $1$ in the first and third rows.
	The fourth column of $\sigmabold(xC+yA+b)$ is $\sigma(x_1 + x_3)$  and equals $1$ iff $x_1 = 1$ or $x_3 = 1$.
	The last column handles the negation $\lnot \phi_4$. The last column of $\sigmabold(xC+yA+b)$ is $\sigma(1 - x_4)$.

\end{example}

\subsection{From GNNs to Logic}

Now, we shift our attention to show how to compute a $\logicKsharpone$-formula that is equivalent to a GNN.
Note that this direction was already tackled by \citet{barcelo_logical_2020} for graded modal logic for the subclass of GNNs that are FO-expressible, 
but their proof is not constructive. Here we give an effective construction. Furthermore, the construction can be done in poly-time in $|\aGNN|$. This point is crucial: it means that we can transform efficiently a GNN into a logical formula and then perform all the reasoning tasks in the logic itself.

\begin{theorem}\label{thm:fromGNNtoLogic}
	Let $\aGNN$ be a GNN. We can compute in polynomial time wrt.\ $|\aGNN|$ a  $\logicKsharpone$-formula $\varphi_\aGNN$, represented as a DAG, such that
	$\semanticsof{\aGNN} = \semanticsof{\varphi_\aGNN}$.
\end{theorem}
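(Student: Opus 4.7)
The plan is to build, by induction on the layer index $t$, a $\logicKsharpone$-formula $\phi_{t,i}$ for each $t \in \{0,\dots,L\}$ and each component $i$ of the state vector, such that for every pointed graph $(G,u)$, $(G,u) \models \phi_{t,i}$ iff $x_t(u)_i = 1$. The crucial preliminary observation is that because the input features $\labeling(u)(p_j) \in \{0,1\}$, all parameters $C$, $A$, $b$ are integer, and truncated ReLU maps integers to $\{0,1\}$ (since $\sigma(z)=1$ when $z \geq 1$ and $\sigma(z)=0$ when $z \leq 0$, while the argument of $\sigma$ is always integral by a trivial induction), every component of every state vector $x_t(u)$ takes values in $\{0,1\}$. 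Hence each such component can indeed be captured by a Boolean formula.

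For the base case, set $\phi_{0,i} := p_i$ for each input proposition, matching $x_0(u)_i = \labeling(u)(p_i)$. For the inductive step, assume $\phi_{t-1,j}$ has been constructed for each $j$. The $t$-th layer computes
$$x_t(u)_i = \sigma\Bigl(\sum_j C_{ji}\, x_{t-1}(u)_j + \sum_j A_{ji} \sum_{v:\,uv \in \setedges} x_{t-1}(v)_j + b_i\Bigr).$$
Using the inductive hypothesis, $x_{t-1}(u)_j$ is captured by $\istrue{\phi_{t-1,j}}$ evaluated at $u$, and $\sum_{v:\,uv \in \setedges} x_{t-1}(v)_j$ is exactly $\modalitynumber \phi_{t-1,j}$ evaluated at $u$. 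So define the $\logicKsharpone$-expression
$$E_{t,i} \;:=\; \sum_j C_{ji}\times \istrue{\phi_{t-1,j}} \;+\; \sum_j A_{ji}\times \modalitynumber \phi_{t-1,j} \;+\; (b_i - 1),$$
and set $\phi_{t,i} := (E_{t,i} \geq 0)$. Since the argument of $\sigma$ is an integer, $\sigma(\cdot)=1$ iff the argument is $\geq 1$, so $\phi_{t,i}$ indeed characterizes $x_t(u)_i = 1$. Finally, writing $\CLS(x) = \sum_i a_i x_i \geq 1$, define
$$\varphi_\aGNN \;:=\; \Bigl(\sum_i a_i \times \istrue{\phi_{L,i}} \;+\; (-1) \;\geq\; 0\Bigr);$$
by the invariant, $(G,u) \in \semanticsof{\varphi_\aGNN}$ iff $\CLS(x_L(u)) = 1$, i.e.\ iff $(G,u) \in \semanticsof{\aGNN}$.

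For the polynomial-time bound and the DAG constraint, we construct $L \cdot d$ formulas $\phi_{t,i}$, each being a single $(\cdot \geq 0)$ atom built from a dedicated arithmetic expression $E_{t,i}$ of $O(d)$ arithmetic nodes. The formula nodes $\phi_{t-1,j}$ referenced from inside the $\istrue{(\cdot)}$ and $\modalitynumber(\cdot)$ leaves are shared across the different $E_{t,i}$'s, while each occurrence of $\istrue{\phi_{t-1,j}}$ or $\modalitynumber \phi_{t-1,j}$ inside a distinct arithmetic expression is a fresh node. This respects the in-degree-$1$ constraint on arithmetic nodes imposed by the DAG definition while keeping the total DAG size $O(L \cdot d^2)$ weighted by the bit-lengths of the parameters of $\aGNN$, hence polynomial in $|\aGNN|$.

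The main obstacle, and the thing to verify carefully, is the Boolean-state invariant: although truncated ReLU is defined as a real-valued map, we need that the argument passed to $\sigma$ in every layer at every vertex is integral so that $\sigma$ collapses to the indicator $[\cdot \geq 1]$. This follows by induction from integrality of the initial features, integrality of $C$, $A$, $b$, and the fact that aggregation is a plain sum; once this is in place the translation is essentially a syntactic rewriting, and the only remaining bookkeeping is to keep arithmetic-expression nodes fresh (in-degree $1$) while sharing the formula nodes $\phi_{t,i}$ to preserve the polynomial size bound.
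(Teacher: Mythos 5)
Your proposal is correct and follows essentially the same route as the paper's proof: one $(\cdot\geq 0)$ (equivalently $\geq 1$) formula per layer and per coordinate, built from $\istrue{\phi_{t-1,j}}$ and $\modalitynumber\phi_{t-1,j}$ with the layer's integer parameters as coefficients, justified by the same Boolean-state invariant (integrality of parameters plus truncated ReLU keeps every $x_t(u)$ in $\{0,1\}^d$), and made polynomial by sharing the subformulas $\phi_{t-1,j}$ in a DAG while keeping arithmetic nodes of in-degree one. Your explicit handling of the in-degree-$1$ constraint and of $\sigma$ collapsing to the indicator $[\cdot\geq 1]$ on integers is a slightly more careful write-up of exactly the argument the paper gives.
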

\begin{proof}
	Let $\aGNN$ be a GNN with layers $l_1, \dotsc, l_k$ where $\COMB_i$ has input dimensionality 
	$2m_i$, output dimensionality $n_i$ and parameters $C_i$, $A_i$ and $b_i$ and $\CLS(x) = a_1 x_1 + 
	\dotsb + a_{n_k} x_{n_k} \geq 1$. We assume that $m_i =n_{i-1}$ for $i \geq 2$, meaning a well
	defined GNN.
	We build a formula $\varphi_\aGNN$ over propositions $p_1, \dotsc, p_{m_1}$ inductively as follows.
	Consider layer $l_1$. We build formulas $\varphi_{1,j} = \sum_{k=1}^{m_1} 1_{p_k} (C_1)_{kj} + \modalitynumber{p_k} (A_1)_{kj} + b_j \geq 1$ for each $j \in \{1, \dotsc, n_1\}$.
	Now, assume that $\varphi_{i-1,1}, \dotsc, \varphi_{i-1,n_{i-1}}$ are given. Then, we build formulas $\varphi_{i, j} =  \sum_{k=1}^{m_i} 1_{\varphi_{i-1,k}} C_{kj} + 
	\modalitynumber{\varphi_{i-1,k}} A_{kj} + b_j \geq 1$. 
	In the end, we set $\varphi_\aGNN = 1_{\psi}$ where $\psi = a_11_{\varphi_{k,1}}+ \dotsb + 
	a_{n_k}1_{\varphi_{k,n_k}} \geq 1$.

	Let $G,v$ be some pointed graph. The correctness of our construction is straightforward: 
	due to the facts that all parameters in $\aGNN$ are from $\mathbb{Z}$, the value $x_0(u)$ for each $u$ in $G$ 
	is a vector of $0$ and $1$ and that each $\COMB_i$ in $\aGNN$ applies the truncated ReLU pointwise, we have for all $i \in \{1, \dotsc, k\}$ that the 
	value of $x_i(v)$ is again a vector of $0$ and $1$. Therefore, we can capture each $x_i$ using a sequence of $\logicKsharpone$-formulas as done above.
	In combination, we have that $\varphi_\aGNN$ exactly simulates the computation of $\aGNN$.

	The polynomial time of this inductive construction is straightforward: we represent from beginning to
	end the inductively built (sub)formulas in form of a single DAG. This implies that in intermediate steps
	we do not have to rewrite already built subformulas $\varphi_{i-1,k}$, but can simply refer to them. Thus,
	in each step of the inductive procedure we only need $n_i \cdot m_i$ many steps to build all corresponding 
	subformulas. As all $n_i, m_i$ and $k$ are given by $\aGNN$, we have that the procedure is polynomial
	in the size of $\aGNN$ as long as we represent $\varphi_\aGNN$ respectively its subformulas as a DAG. 
\end{proof}

\begin{example}
	The idea is to build formulas that characterize the state $x_t$ (which is the output at layer $t$ if $t \geq 1$, and also the input at layer $t+1$). Initially, $x_0$ is the state whose values comes the truth values of the atomic propositions. We suppose we have only two propositions $p_1, p_2$. Hence, the two formulas that represent state $\statet{0}$ are $\phi_{01} = p_1$ and $\phi_{02} = p_2$. Suppose that the two layers are given by the same combination and aggregation functions: 
	\begin{align*}
		\statetv{1}u= \COMB (\statetv {0} u,\AGG(\multiset{\statetv {0} v | uv \in \setedges})) \\
		\statetv{2}u= \COMB (\statetv {1} u,\AGG(\multiset{\statetv {1} v | uv \in \setedges}))
	\end{align*}
	with aggregation function $\AGG (X) = \sum_{x \in X} x$, and combination function
	$$\COMB((x, x'),(y, y')) = \left(\begin{matrix}
		\sigma(x + 2x' - 3y + 4y' + 5) \\
		\sigma(6x + 7x' + 8y - 9y' + 10) \\
	\end{matrix}\right).$$
	The formulas that represent the state $x_1$ are formula
	$\phi_{11} = 1_{\phi_{01}} + 2\times 1_{\phi_{02}} - 3 \modalitynumber  \phi_{01} + 4  \modalitynumber  \phi_{02} + 5 \geq 1$
	and formula
	$\phi_{12} = 6\times 1_{\phi_{01}} + 7\times 1_{\phi_{02}} + 8 \modalitynumber \phi_{01} - 9  \modalitynumber \phi_{02} + 10 \geq 1$.
	The formulas that represent the state $x_2$ are formula
	$\phi_{21} = 1_{\phi_{11}} + 2\times 1_{\phi_{12}} - 3 \modalitynumber  \phi_{11}+ 4  \modalitynumber  \phi_{12} + 5 \geq 1$ and formula
	$\phi_{22} = 6\times 1_{\phi_{11}} + 7\times 1_{\phi_{12}} + 8 \modalitynumber \phi_{11} - 9  \modalitynumber \phi_{12} + 10 \geq 1$.
	
	Finally, suppose that the classification function is given by $\CLS(x, x') = 5x - 3x' \geq 1$. So the formula $tr(\aGNN)$ that represents the GNN $\aGNN$ is $5 \times 1_{\phi_{21}} - 3 \times 1_{\phi_{22}}  \geq 1$.
\end{example}

\section{Complexity of the Logic}
\label{section:complexitylogic}

In this section, we address the complexity of the satisfiability problem of the logic $\logicKsharpone$. Specifically, we prove that it is \PSPACE-complete (\Cref{theorem:pspacecompletenessKsharpone}). Additional details are available in \citeappendix.

We know from \citet{Ladner77} that the satisfiability problem of the standard modal logic K is \PSPACE-hard, and we observed before that $\logicKsharpone$ is an extension of logic K since we can define $\lbox \phi :=( \modalitynumber (\lnot \phi) \leq 0)$ and we are working with graphs whose relation (represented by the set $\setedges$ of edges) is unconstrained. Hence, we have:
\begin{proposition}\label{prop:pspacehardnesslogicKsharpone}
The satisfiability problem of the logic $\logicKsharpone$ is \PSPACE-hard.
\end{proposition}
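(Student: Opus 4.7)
The plan is a straightforward polynomial-time reduction from the satisfiability problem of basic propositional modal logic $K$, which is \PSPACE-hard by Ladner's theorem, to the satisfiability problem of $\logicKsharpone$. I would define a translation $tr$ from $K$-formulas to $\logicKsharpone$-formulas by structural induction, setting $tr(p) = p$, letting $tr$ commute with $\lnot$ and $\lor$, and putting $tr(\lbox \psi) = (\modalitynumber(\lnot\, tr(\psi)) \leq 0)$, exactly the abbreviation pointed out in the text just before the proposition.

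The second step is to verify that $tr$ runs in polynomial time and that the resulting DAG has size linear in $|\phi|$. Each occurrence of $\lbox$ is replaced by a constant-size gadget (one negation, one $\modalitynumber$ node, one scalar-multiplication by $-1$, and one $\geq 0$ atomic formula) placed on top of the DAG for $tr(\psi)$. The in-degree constraint from \Cref{section:logic} is respected by creating a fresh $\modalitynumber$ node for every distinct occurrence of $\lbox$; shared $K$-subformulas may still be shared at the level of their translated formula nodes, which carry no in-degree restriction.

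The third step is semantic correctness: by induction on $\phi$, for every pointed graph $(G, u)$ we show $(G, u) \models_K \phi$ iff $(G, u) \models_{\logicKsharpone} tr(\phi)$. The only nontrivial case is $\lbox \psi$: by definition of $\modalitynumber$, $\semanticsvalue{\modalitynumber(\lnot\, tr(\psi))}{G,u}$ counts the successors of $u$ that fail $tr(\psi)$, so $\modalitynumber(\lnot\, tr(\psi)) \leq 0$ holds at $u$ iff every successor satisfies $tr(\psi)$, which by the induction hypothesis is exactly $(G,u) \models_K \lbox\psi$. Satisfiability of $\phi$ in $K$ is therefore equivalent to satisfiability of $tr(\phi)$ in $\logicKsharpone$, and \PSPACE-hardness transfers.

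The only point requiring care, and what I view as the main (minor) obstacle, is ensuring that the two logics are evaluated over exactly the same class of structures: $K$-satisfiability is defined over arbitrary pointed Kripke models with no frame restrictions (reflexivity, transitivity, seriality, and so on), and the pointed labeled directed graphs of \Cref{section:background} are precisely those Kripke models. Once this alignment is made explicit, the reduction carries no further subtleties.
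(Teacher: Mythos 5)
Your reduction is exactly the paper's argument: the paper proves hardness by observing that $\logicKsharpone$ extends modal logic $K$ via $\lbox \phi := (\modalitynumber(\lnot\phi) \leq 0)$ over unconstrained graphs and invoking Ladner's theorem, which is precisely your translation $tr$ spelled out in more detail. The proposal is correct and takes essentially the same approach.
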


\usetikzlibrary{quotes}
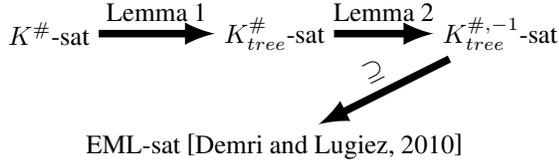
\begin{figure}
\begin{center}  
\begin{tikzpicture}
\node (ks1) at (0, 0) {$\logicKsharpone$-sat};
\node (ks1t) at (3,0) {$\logicKsharpone_{tree}$-sat};
\node (kst) at (6,0) {$\logicKsharp_{tree}$-sat};
\node (dl2010) at (3,-1.5) {EML-sat \cite{DBLP:journals/japll/DemriL10}};

\draw[processarrow] (ks1) edge["\Cref{lemma-satKone-dag-tree}"] (ks1t);
\draw[processarrow] (ks1t) edge["\Cref{lemma-satKoneK}"] (kst);
\draw[processarrow] (kst) to [sloped, "$\supseteq$"] (dl2010);
\end{tikzpicture}
\end{center}
\caption{Schema of the proof to establish the PSPACE upper bound of the satisfiability problem of $\logicKsharpone$. Arrows are poly-time reductions.\label{fig:schema-proof-complexity-logic}}
\end{figure}

To show that the satisfiability problem of $\logicKsharpone$ is also in \PSPACE, we are going to follow a strategy illustrated in \Cref{fig:schema-proof-complexity-logic}:
\begin{enumerate}
\item First we show that the problem can be reduced efficiently to the satisfiability problem of $\logicKsharpone$ with the formulas represented as trees (and not arbitrary DAGs). Let us call this problem $\logicKsharpone_{tree}$-sat. This will be \Cref{lemma-satKone-dag-tree}.
\item Second, we show that $\logicKsharpone_{tree}$-sat can be reduced efficiently to the satisfiability problem of $\logicKsharp$ with formulas represented as trees. This will be \Cref{lemma-satKoneK}.
\item Third, we conclude by simply observing that $\logicKsharp$ can be seen as fragment of Extended Modal Logic (EML) introduced by \citet{DBLP:journals/japll/DemriL10}, whose satisfiability problem is in \PSPACE.
\end{enumerate}
It is sufficient to establish an upper-bound on the complexity of  $\logicKsharpone$, which is stated in the next proposition.
\begin{proposition}\label{prop:pspacemembershipKsharpone}
The satisfiability problem of the logic $\logicKsharpone$ is in \PSPACE.
\end{proposition}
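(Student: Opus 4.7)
The plan is to chain together the three polynomial-time reductions announced in Figure~\ref{fig:schema-proof-complexity-logic} and then invoke the PSPACE upper bound for the satisfiability problem of EML due to \citet{DBLP:journals/japll/DemriL10}. Since polynomial-time reductions preserve membership in PSPACE, such a chain is sufficient to establish the proposition.

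First, for Lemma~\ref{lemma-satKone-dag-tree}, I would reduce $\logicKsharpone$-sat with DAG input to its tree counterpart. The main worry is the potential exponential blow-up when unfolding a DAG into a tree. To avoid it I would exploit the constraint that in a $\logicKsharpone$ DAG every arithmetic-subexpression node (for $c$, $\istrue{\phi}$, $\modalitynumber \phi$, sum, and scalar product) has in-degree exactly one, so that only Boolean subformulas can be shared. For each shared Boolean subformula $\phi$ I would introduce a fresh proposition $p_\phi$, replace the occurrences of $\phi$ by $p_\phi$, and conjoin definitional constraints $p_\phi \lequiv \phi'$ where $\phi'$ is the body of $\phi$ with its own shared subformulas similarly renamed. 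Since the modal depth $d$ of the original formula is bounded by $|\phi|$, these equivalences only need to be enforced up to depth $d$, which I would express using $\lboxupto{d}$ encoded compactly as a nested modal scaffold. The resulting formula has a tree DAG representation and polynomial size.

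Second, for Lemma~\ref{lemma-satKoneK}, I would eliminate the constructions $\istrue{\phi}$ from a tree formula, reducing to $\logicKsharp_{tree}$-sat. Because $\istrue{\phi}\in\set{0,1}$ depends only on whether $\phi$ holds at the current world, each inequality $\NTexpression \geq 0$ mentioning $\istrue{\phi_1},\dots,\istrue{\phi_m}$ is equivalent to a Boolean combination that branches on the truth of each $\phi_i$ at the current world and substitutes the corresponding $0/1$ value into $\NTexpression$. A direct case split is exponential, but since the rewrite is local to a single world, the standard Tseitin-style trick of naming each $\phi_i$ by a fresh proposition keeps the output polynomial in the tree input. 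Finally, $\logicKsharp$ in tree form embeds syntactically into the EML fragment of \citet{DBLP:journals/japll/DemriL10}, whose satisfiability problem lies in PSPACE; composing the three reductions with that algorithm yields the desired PSPACE procedure for $\logicKsharpone$-sat.

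The step I expect to be the main obstacle is the first one: the definitional equivalences introduced for shared Boolean subformulas must hold at every world used to evaluate the original formula, without duplicating them once per occurrence (which would immediately reintroduce the exponential blow-up that the construction is meant to prevent). The in-degree-one restriction on arithmetical nodes is precisely what makes this bookkeeping tractable, since it confines the sharing that must be eliminated to the Boolean layer, where Tseitin-style renaming combined with modal-depth-bounded propagation is well-behaved.
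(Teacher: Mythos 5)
Your overall architecture (DAG-to-tree reduction, elimination of $\istrue{\phi}$, embedding of $\logicKsharp$ into EML) is exactly the paper's, and your first step matches the paper's proof of Lemma~\ref{lemma-satKone-dag-tree}: fresh propositions for shared Boolean subformulas, definitional equivalences propagated with $\lboxupto{m}$, and the in-degree-one restriction on arithmetical nodes doing the work of keeping the output polynomial. The third step is also as in the paper. The problem is your second step.

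Your proposed elimination of $\istrue{\phi}$ does not go through. Renaming each $\phi_i$ by a fresh proposition $p_i$ leaves you with terms $\istrue{p_i}$ inside the linear expression, which are still $\istrue{\cdot}$ constructions and hence still outside $\logicKsharp$; Tseitin naming removes shared subformulas, not the construction itself. And the case split you fall back on is genuinely exponential: an inequality $\sum_i c_i \times \istrue{\phi_i} + \sum_j d_j \times \modalitynumber{\psi_j} + c \geq 0$ with $m$ distinct $\istrue{\cdot}$ terms expands into $2^m$ disjuncts, one per truth assignment to $p_1,\dots,p_m$, each with a \emph{different} residual constant $\sum_{i \in S} c_i + c$; there is no sharing among these disjuncts for Tseitin renaming to exploit, so the output is not polynomial. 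The missing idea (Lemma~\ref{lemma-satKoneK} in the paper) is to keep the term \emph{arithmetical}: replace $\istrue{\chi}$ by the counting term $\modalitynumber{p_\chi}$ for a fresh proposition $p_\chi$, add for each relevant world an ``impure'' gadget successor labelled $p_\chi$ exactly when $\chi$ holds there, enforce this with $\lboxupto{m}\bigl((\chi \limply \modalitynumber{p_\chi} = 1) \land (\lnot\chi \limply \modalitynumber{p_\chi} = 0)\bigr)$, and relativize every original counting modality to $\modalitynumber{(\cdot \land \lnot p_\chi)}$ so the gadget successors are not counted. This substitutes a single term for a single term, so the blow-up is polynomial, and iterating once per $\istrue{\cdot}$ occurrence yields a $\logicKsharp$ formula. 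Without this (or an equivalent) device, your chain of reductions does not establish the $\PSPACE$ upper bound.
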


The initial step in both proofs of \Cref{lemma-satKone-dag-tree} and \Cref{lemma-satKoneK} shares the same idea. Given a subformula $\phi$ or arithmetic subexpression $1_\phi$, we introduce a fresh propositional variable $p_\phi$. The variable can then be used as a shortcut, or used \emph{in} a shortcut, to refer to the truth (resp.\ value) of the original subformula (resp.\ arithmetic subexpression).
An adequate formula is then added to `factorize' the original subformula or arithmetic subexpression, and enforced at every `relevant' world of a model to capture the intended properties of the transformations. 
To this end, we can rely on the modality $\lboxupto{m} \phi$, simply defined as
$\lboxupto{m} \phi := \bigwedge_{0\leq i \leq m} \Box^i \phi$,
with $m$ being the modal depth of the original formula.

To prove \Cref{prop:pspacemembershipKsharpone},
we first show that we can efficiently transform a $\logicKsharpone$ formula (represented as a DAG) into an equi-satisfiable $\logicKsharpone$ formula which is represented as a tree.

For every node of the DAG that is a formula, we introduce a fresh proposition.
Starting from the leafs of the DAG, we replace every subformula $\psi$ with its corresponding proposition $p_\psi$ which will simulate the truth of $\psi$.
As we go towards the root, we replace $\psi$ with $p_\psi$, and
we add a formula of the form $\lboxupto{m}(p_\psi \leftrightarrow \psi')$, where $\psi'$ is a syntactic variant of $\psi$, using the previously introduced fresh variables in place of the original formulas.
Since the root is a formula (as opposed to an arithmetic expression), we end with a conjunction of $\lboxupto{m}(p_\psi \leftrightarrow \psi')$ formulas, and a DAG consisting of only one node with the proposition $p_\phi$ which simulates the original formula.
Since we are not duplicating arbitrary formulas, but only propositional variables, this process ensures that the size of the new formula (as a string or as a tree) remains polynomial in the size of the DAG.
The idea is similar to Tseitin transformation \cite{tseitin1983complexity}.
\begin{example}
Consider the DAG depicted on \Cref{figure-dag}(a).
We introduce the propositional variables $p_p$, $p_q$, $p_\land$, $p_\leq$, and $p_\lor$, each corresponding to a node of the DAG denoting a subformula.
The formula is equi-satisfiable with the formula containing the following conjuncts:
$p_\lor$,
$\lboxupto{1}(p_p \leftrightarrow p)$,
$\lboxupto{1}(p_q \leftrightarrow q)$,
$\lboxupto{1}(p_\land \leftrightarrow p_p \land p_q)$,
$\lboxupto{1}(p_\geq \leftrightarrow \modalitynumber p_\land \leq 1_{p_\land})$, and
$\lboxupto{1}(p_\lor \leftrightarrow p_\land \lor p_\geq)$.
For the formula to be true, $p_\lor$ must be true, and so must be $p_\land$ or $p_\geq$, and so on.
\end{example}
The previous example explains the idea of the proof in order to efficiently transform formulas represented as arbitrary DAGs into equi-satisfiable formulas represented as tree. The following lemma states that result as a reduction. 

\begin{lemma}
\label{lemma-satKone-dag-tree}
The satisfiability problem of the logic $\logicKsharpone$ reduces to $\logicKsharpone_{tree}$-sat in poly-time.
\end{lemma}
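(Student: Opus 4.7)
The plan is to formalize the Tseitin-style construction sketched just before the lemma. Given a DAG $D$ representing a $\logicKsharpone$-formula $\phi$, let $N$ be the set of nodes of $D$ whose label is a formula (as opposed to an arithmetic expression). For each $n \in N$ with associated subformula $\psi_n$, introduce a fresh propositional variable $p_n$. Writing $m = \modaldepthof{\phi}$, the reduction would output
\[
  p_{\mathrm{root}} \wedge \bigwedge_{n \in N} \lboxupto{m}\bigl(p_n \lequiv \widetilde{\psi_n}\bigr),
\]
where $\widetilde{\psi_n}$ is the one-level unfolding of the operation at $n$: a propositional letter if $n$ is a proposition node; $\lnot p_{n'}$ or $p_{n'} \lor p_{n''}$ if $n$ is a negation or disjunction node, using the surrogate variables of the immediate formula-children; and $\NTexpression \geq 0$ if $n$ is an inequality node, with every occurrence of a formula inside a $1_{\cdot}$ or $\modalitynumber \cdot$ replaced by its surrogate.

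The second step is to check that this produces a polynomial-size tree computable in polynomial time. Because each arithmetic node has in-degree one in $D$, the sub-DAG of arithmetic descendants of any inequality node is already a tree, so $\widetilde{\psi_n}$ inherits a tree shape with no blow-up; each $\widetilde{\psi_n}$ has size $O(|D|)$. There are at most $|N| \leq |D|$ conjuncts, and $\lboxupto{m}$ expands to a syntactic tree of $m+1 \leq |D|+1$ modal prefixes, giving an output tree of size $O(|D|^3)$.

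The third step is to establish equi-satisfiability. For soundness, assume $(G,u) \models \phi$; extend the labelling so that $p_n$ is true at a vertex $v$ iff $(G,v) \models \psi_n$. A routine bottom-up induction on $N$ shows that for every vertex within modal distance $m$ of $u$ we have $p_n \lequiv \widetilde{\psi_n}$, so every $\lboxupto{m}$-conjunct is satisfied at $u$ and $p_{\mathrm{root}}$ holds by choice of labelling. Conversely, if the output formula holds at some $(G', u')$, a top-down induction on $N$ shows that within modal distance $m$ of $u'$ the truth of $p_n$ forces the truth of $\psi_n$, hence $(G', u') \models \phi$.

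The main obstacle is justifying that the single uniform bound $m = \modaldepthof{\phi}$ is sufficient for every conjunct. In a tree representation this would be easy because each subformula has a well-defined nesting depth, but in a shared DAG the same $\psi_n$ may occur at several depths; using the coarse uniform bound avoids tracking per-node depths at the cost of only a polynomial factor, and makes both directions of the induction go through uniformly. The in-degree-one restriction on arithmetic expressions is precisely what keeps each $\widetilde{\psi_n}$ from exploding and is essential for the polynomial size bound.
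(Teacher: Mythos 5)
Your proposal is correct and follows essentially the same route as the paper's proof: a Tseitin-style renaming of each formula node by a fresh proposition, a one-level unfolding of each node enforced at all relevant worlds via $\lboxupto{m}$ with $m=\modaldepthof{\phi}$, a cubic size bound resting on the in-degree-one restriction for arithmetic nodes, and equi-satisfiability argued in both directions. The only cosmetic difference is that you attach $\lboxupto{m}$ to each conjunct separately, whereas the paper's appendix applies a single $\lboxupto{m}$ to the whole conjunction $\Psi$, which it explicitly notes is equivalent.
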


Now, we show that $1_\chi$ expressions can be removed efficiently and in a satisfiability-preserving way. Technically, $1_\chi$ expressions are replaced by a counting modality, counting successors with a fresh proposition $p_{\chi}$, which artificially simulates the truth of $\chi$.
	\begin{example}
We consider the formula $1_{\chi} \geq 1$. We introduce a fresh propositional variable $p_{\chi}$ and replace the $1_{\chi}$ with a counting modality $\modalitynumber p_{\chi}$. Hence, we simply rewrite the $\logicKsharpone$ formula into the $\logicKsharp$ formula $\modalitynumber p_{\chi} \geq 1$.
We then add a subformula that characterizes the fact that counting $p_{\chi}$-worlds simulates the value of $1_{\chi}$. For that, we say that if $\chi$ holds, then there must be exactly one $p_{\chi}$-successor and if $\chi$ does not hold then there must be no $p_{\chi}$-successor.
At the end, the formula $1_{\chi} \geq 1$ is rewritten into the $\logicKsharp$ formula:
\[
(\modalitynumber p_{\chi} \geq 1) \land \lboxupto{0}((\chi \rightarrow \modalitynumber p_{\chi} = 1) \land (\lnot \chi \rightarrow \modalitynumber p_{\chi} = 0)).
\]
\end{example}
The previous example explains the idea of the proof in order to get rid of $1_\chi$ expressions. The following lemma states that result as a reduction.
\begin{lemma}
\label{lemma-satKoneK}
When formulas are represented as trees, the satisfiability problem of the logic $\logicKsharpone$ reduces to the satisfiability problem of $\logicKsharp$ in poly-time. 
\end{lemma}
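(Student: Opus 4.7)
My plan is to lift the pattern from the preceding example to an arbitrary formula. Let $\phi\in\logicKsharpone$ be given as a tree of modal depth $m=\modaldepthof\phi$, and enumerate the $1_\chi$ subexpressions of $\phi$ in post-order. For each such $\chi$ I introduce a fresh proposition $p_\chi$, replace every occurrence of $1_\chi$ in the formula under construction by $\modalitynumber p_\chi$, and append as a top-level witness conjunct
\[
\lboxupto{m}\bigl((\chi' \limply \modalitynumber p_\chi = 1)\land(\lnot\chi'\limply \modalitynumber p_\chi = 0)\bigr),
\]
where $\chi'$ is the already-translated version of $\chi$ (so $\chi'$ is in $\logicKsharp$ by the post-order invariant). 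Since $\lboxupto{m}\psi$ unfolds to $\bigwedge_{0\leq i\leq m}\lbox^i\psi$ and $\lbox$ is already expressible in $\logicKsharp$ (as $\modalitynumber\lnot\psi\leq 0$), the resulting formula $\phi'$ lies in $\logicKsharp$; a direct size count yields $|\phi'|=O(|\phi|^3)$, giving a polynomial-time reduction.

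The completeness direction (a model of $\phi'$ gives a model of $\phi$) is routine: the added conjuncts force $\modalitynumber p_\chi$ to coincide with $1_\chi$ at every world of depth at most $m$, and a straightforward induction on subformulas of $\phi$ then transfers satisfaction at the root.

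The main obstacle is soundness. Starting from $(G,u)\models\phi$, I plan to build $(G',u)$ by attaching, to each world $w$ reachable from $u$ in at most $m$ steps and each $\chi$ with $(G,w)\models\chi$, exactly one fresh witness successor satisfying only $p_\chi$ and with no outgoing edges. The catch is that such dummies can accidentally validate trivially true subexpressions of $\phi$ (for instance $0\geq 0$, or $\lnot p$, or $\modalitynumber q\geq 0$) and so be picked up by an original counting modality $\modalitynumber\psi$ inherited from $\phi$, perturbing the truth of the surrounding formula. To sidestep this, I would introduce one further fresh proposition $r$ marking ``real'' worlds, require $r$ at the root, add the conjunct $\lboxupto{m}(p_\chi \limply \lnot r)$ for every $\chi$, and relativise every counting modality inherited from $\phi$ by rewriting $\modalitynumber\psi$ into $\modalitynumber(\psi\land r)$. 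Real worlds then satisfy $r$ while dummies do not, so dummies fall out of every original count but are still caught by the newly introduced $\modalitynumber p_\chi$. An induction on the structure of $\phi$ then yields $(G',u)\models\phi'$, completing the reduction.
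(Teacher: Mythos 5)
Your construction is essentially the paper's: it too processes the $1_\chi$ subexpressions innermost-first, introduces a fresh atom $p_\chi$, replaces $1_\chi$ by $\modalitynumber p_\chi$, adds the witness conjunct $\lboxupto{m}\bigl((\chi \limply \modalitynumber p_\chi = 1) \land (\lnot\chi \limply \modalitynumber p_\chi = 0)\bigr)$, and---this is exactly the obstacle you correctly isolate---relativises every inherited counting modality so that the dummy (``impure'') successors are never counted, via $\tau(\modalitynumber\psi) = \modalitynumber(\tau(\psi) \land \lnot p_\chi)$. Your single ``real''-marker $r$ with $\modalitynumber(\psi \land r)$ is just the positive form of that exclusion and works equally well (it even avoids accumulating one $\lnot p_{\chi_i}$ per iteration). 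One loose end remains, which the paper's own write-up shares: the witness conjunct sits under $\lboxupto{m}$, whose boxes quantify over \emph{all} successors, so it is also enforced at the dummy worlds themselves; there $\modalitynumber p_\chi = 0$ while $\chi$ may hold vacuously (e.g.\ $\chi = \lnot q$ or $\chi = (0 \geq 0)$ at a propositionally empty, successor-free dummy), so the constructed model $G'$ need not satisfy the witness and the left-to-right direction breaks as stated. The patch is one line: guard the witness by your marker, i.e.\ use $\lboxupto{m}\bigl(r \limply \cdots\bigr)$ (or guard by $\lnot p_\chi$ in the paper's version); completeness is unaffected since $1_\chi$ is only ever evaluated at worlds reached through the relativised modalities, which are all $r$-worlds. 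With that fix your argument goes through exactly as the paper's does, including the cubic size bound.
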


The logic $\logicKsharp$ can be seen as a fragment of the logic EML introduced by \citet{DBLP:journals/japll/DemriL10}, where formulas are represented as strings (which have the same size as their syntactic tree representation), and whose satisfiability problem is in \PSPACE. Together with \Cref{lemma-satKone-dag-tree} and \Cref{lemma-satKoneK}, this proves \Cref{prop:pspacemembershipKsharpone}.
\Cref{prop:pspacehardnesslogicKsharpone} and \Cref{prop:pspacemembershipKsharpone} allow us to conclude.
\begin{theorem}\label{theorem:pspacecompletenessKsharpone}
The satisfiability problem of the logic $\logicKsharpone$ is \PSPACE-complete.
\end{theorem}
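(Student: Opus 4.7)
The plan is to combine the two propositions that bracket the complexity of $\logicKsharpone$-satisfiability, namely \Cref{prop:pspacehardnesslogicKsharpone} (hardness) and \Cref{prop:pspacemembershipKsharpone} (membership). Since both are already stated in the excerpt, the proof of \Cref{theorem:pspacecompletenessKsharpone} itself is a one-liner: hardness follows from the fact that standard modal logic K embeds into $\logicKsharpone$ via $\lbox \phi := (\modalitynumber(\lnot\phi) \leq 0)$ over unconstrained Kripke frames, which is \Cref{prop:pspacehardnesslogicKsharpone}, and membership is \Cref{prop:pspacemembershipKsharpone}. So the real content is in establishing those two propositions (which the excerpt already partially sketches), and my plan is to explain how I would assemble them.

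For membership, I would follow the three-step schema depicted in \Cref{fig:schema-proof-complexity-logic}. First, I would reduce $\logicKsharpone$-sat on DAGs to $\logicKsharpone_{tree}$-sat via \Cref{lemma-satKone-dag-tree}: introduce a fresh propositional variable $p_\psi$ for each DAG-subformula $\psi$, replace occurrences by $p_\psi$, and at the top conjoin axioms $\lboxupto{m}(p_\psi \lequiv \psi')$ where $\psi'$ uses the fresh names for immediate subformulas and $m$ bounds the modal depth. Since only propositional variables are duplicated (never arithmetic expressions), the resulting tree-sized formula stays polynomial in the DAG size. Second, I would apply \Cref{lemma-satKoneK} to eliminate every $1_\chi$ expression: replace $1_\chi$ by $\modalitynumber p_\chi$ for a fresh $p_\chi$, and conjoin $\lboxupto{m}\bigl((\chi \limply \modalitynumber p_\chi = 1) \land (\lnot\chi \limply \modalitynumber p_\chi = 0)\bigr)$ so that the fresh proposition correctly simulates the indicator. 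Third, I would identify the resulting $\logicKsharp$ formula (in tree form) as a fragment of the Extended Modal Logic EML of \citet{DBLP:journals/japll/DemriL10} whose satisfiability is in \PSPACE{}, and invoke their decision procedure.

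For hardness, I would just spell out the trivial reduction from K-satisfiability that the paper already hints at: given a K-formula $\phi$, replace each $\lbox\psi$ recursively by $\modalitynumber(\lnot\psi) \leq 0$, producing a $\logicKsharpone$ formula of linear size that is satisfiable over arbitrary (directed) Kripke frames iff $\phi$ is. Ladner's classical \PSPACE-hardness of K-sat then transfers.

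The main obstacle is not in \Cref{theorem:pspacecompletenessKsharpone} itself, which is a clean corollary, but rather in the correctness and size analysis of the two reductions feeding into it. In particular, the delicate point is the first reduction: because arithmetic subexpressions may be reused in a DAG only implicitly through the shared \emph{formula} nodes they sit under, the Tseitin-style flattening must be careful to introduce fresh variables only for formula-nodes (not for arithmetic-expression nodes), so that the axioms $\lboxupto{m}(p_\psi \lequiv \psi')$ remain well-formed $\logicKsharpone$ formulas and the overall blow-up stays polynomial. I would verify this by induction on the DAG, checking that each axiom introduced has size bounded by the in-arity of the corresponding node times the (constant-per-node) size of the reference to its immediate subformulas, so that the total size is linear in the DAG. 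Everything else, including the $1_\chi$ elimination and the embedding into EML, is a straightforward syntactic rewriting once this first reduction is in place.
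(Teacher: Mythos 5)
Your proposal follows exactly the paper's own proof: hardness via the embedding of modal logic K (\Cref{prop:pspacehardnesslogicKsharpone}) and membership via the three-step reduction chain of \Cref{fig:schema-proof-complexity-logic} --- DAG-to-tree flattening (\Cref{lemma-satKone-dag-tree}), elimination of $1_\chi$ expressions (\Cref{lemma-satKoneK}), and the embedding of $\logicKsharp$ into EML. The only minor quibble is your claim that the flattened formula stays \emph{linear} in the DAG size: the $\lboxupto{m}$ prefixes make the paper's bound cubic, $\mathcal{O}(|\phi|^3)$, but this does not affect polynomiality or the correctness of the argument.
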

Since $\co\PSPACE = \PSPACE$, the validity and unsatisfiability problems of $\logicKsharpone$ formulas are also \PSPACE-complete. This will be instrumental in the next section.

\section{Complexity of Reasoning about GNNs}
\label{section:reasoningaboutGNN}

We are now ready to wrap up algorithmic results for reasoning about GNNs.

\begin{corollary}
	\label{corollary:gnnspace}
When considering GNNs,
the problems \Pone--\Pfour are in \PSPACE.
\end{corollary}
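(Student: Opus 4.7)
The plan is to reduce each of the four problems to a $\logicKsharpone$ (un)satisfiability question and invoke \Cref{theorem:pspacecompletenessKsharpone}, using that $\co\PSPACE = \PSPACE$.

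First, I would apply \Cref{thm:fromGNNtoLogic} to the input GNN $\aGNN$ to obtain, in polynomial time, an equivalent $\logicKsharpone$-formula $\varphi_\aGNN$ (represented as a DAG) with $\semanticsof{\aGNN} = \semanticsof{\varphi_\aGNN}$ and $|\varphi_\aGNN|$ polynomial in $|\aGNN|$. After this step, both the GNN and the formula $\phi$ are available as $\logicKsharpone$-formulas over the same semantic domain of pointed graphs, so each of \Pone--\Pfour becomes a purely logical decision problem.

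Second, I would rewrite the four problems uniformly. \Pfour ($\semanticsof{\phi} \cap \semanticsof{\aGNN} \neq \emptyset$) reduces directly to the satisfiability of $\phi \land \varphi_\aGNN$. \Ptwo ($\semanticsof{\aGNN} \subseteq \semanticsof{\phi}$) reduces to the unsatisfiability of $\varphi_\aGNN \land \lnot \phi$. \Pthree ($\semanticsof{\phi} \subseteq \semanticsof{\aGNN}$) reduces to the unsatisfiability of $\phi \land \lnot \varphi_\aGNN$. \Pone reduces to the conjunction of \Ptwo and \Pthree, or equivalently to the unsatisfiability of $\lnot(\varphi_\aGNN \lequiv \phi)$. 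Each of these $\logicKsharpone$-formulas is built from $\varphi_\aGNN$ and $\phi$ by adding a constant number of nodes on top of their two DAGs; hence all reductions are polynomial in $|\aGNN| + |\phi|$ and still produce DAG-represented formulas of $\logicKsharpone$, as required by \Cref{theorem:pspacecompletenessKsharpone}.

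Third, I would conclude by applying \Cref{theorem:pspacecompletenessKsharpone}: satisfiability of $\logicKsharpone$ lies in \PSPACE, and unsatisfiability lies in $\co\PSPACE = \PSPACE$, so \Pone--\Pfour are all in \PSPACE. There is no genuine obstacle here: the corollary follows by composing the polynomial-time translation from GNNs to $\logicKsharpone$ with the PSPACE membership of its satisfiability problem. The only point warranting attention is that the boolean combinations of $\varphi_\aGNN$ and $\phi$ be kept in DAG form rather than unfolded into trees (which would in general blow up the size), but this is immediate from the way the reductions are defined.
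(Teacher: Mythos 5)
Your proposal is correct and follows essentially the same route as the paper: translate $\aGNN$ to $\varphi_\aGNN$ via \Cref{thm:fromGNNtoLogic}, express each of \Pone--\Pfour as a (un)satisfiability question about a Boolean combination of $\varphi_\aGNN$ and $\phi$, and conclude by \Cref{theorem:pspacecompletenessKsharpone} together with $\co\PSPACE=\PSPACE$. The paper only spells out the case of \Pone and remarks that the others are similar, so your explicit treatment of all four cases and the observation about keeping the combined formula in DAG form are welcome but not a different argument.
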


\begin{proof}
	Let us prove it for $\Pone$. For the other problems, the principle is similar. Given $\aGNN$, $\phi$, we can check that $\semanticsof{\aGNN} = \semanticsof{\phi}$, by computing the $\logicKsharpone$-formula $tr(\aGNN) \leftrightarrow \phi$ and then check that $tr(\aGNN) \leftrightarrow \phi$ is valid. In order to check for validity, we check that $\lnot (tr(\aGNN) \leftrightarrow \phi)$ is not $\logicKsharpone$-satisfiable. This can be done in poly-space via Theorem~\ref{theorem:pspacecompletenessKsharpone}.\end{proof}

\begin{theorem}
	Problems \Pone--\Pfour are \PSPACE-complete.
\end{theorem}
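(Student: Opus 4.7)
The plan is to combine Corollary~\ref{corollary:gnnspace} (which places P1--P4 in \PSPACE) with matching \PSPACE-hardness via polynomial-time reductions from $\logicKsharpone$-satisfiability (and, since $\co\PSPACE = \PSPACE$, its validity/unsatisfiability problem), known to be \PSPACE-hard by Proposition~\ref{prop:pspacehardnesslogicKsharpone}. The main device will be a pair of ``constant'' GNNs $\aGNN_{\top}$ and $\aGNN_{\bot}$ of the appropriate signature whose semantics are respectively the class of all pointed graphs and the empty class. With them, each of P1--P4 reduces immediately to an (un)satisfiability question about the input formula $\phi$.

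To build $\aGNN_{\top}$ (respectively $\aGNN_{\bot}$) I would take a single AC-GNN layer of output dimension $1$ with zero matrices $C = 0$ and $A = 0$ and bias $b = 1$ (respectively $b = 0$): the combination function then outputs the constant $\sigma(1) = 1$ (respectively $\sigma(0) = 0$) at every vertex regardless of the input graph, and with classification $\CLS(x) = x_1 \geq 1$ the GNN uniformly returns $1$ (respectively $0$). The input dimension is chosen to match the propositional signature of $\phi$; since all weights are zero, the choice is inconsequential. Both GNNs are well-formed in the sense of Definition~\ref{def:our-GNNs}, have integer parameters, and are computable in constant time.

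Given $\phi$, the four reductions then write themselves. For P4, the intersection $\semanticsof{\phi} \cap \semanticsof{\aGNN_{\top}}$ equals $\semanticsof{\phi}$, hence is non-empty iff $\phi$ is satisfiable. For P1 and P2, both $\semanticsof{\aGNN_{\top}} = \semanticsof{\phi}$ and $\semanticsof{\aGNN_{\top}} \subseteq \semanticsof{\phi}$ hold iff $\semanticsof{\phi}$ contains every pointed graph, i.e.\ iff $\phi$ is valid, i.e.\ iff $\lnot \phi$ is unsatisfiable. For P3, $\semanticsof{\phi} \subseteq \semanticsof{\aGNN_{\bot}} = \emptyset$ holds iff $\phi$ is unsatisfiable. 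No real obstacle is anticipated: the argument hinges only on the simple observation that the constant-true and constant-false classifiers are realisable as GNNs, after which P1--P4 inherit \PSPACE-hardness from the (un)satisfiability problem of $\logicKsharpone$.
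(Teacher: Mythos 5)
Your proposal is correct and follows essentially the same route as the paper: membership via Corollary~\ref{corollary:gnnspace}, and hardness via reductions from validity/unsatisfiability/satisfiability of a $\logicKsharpone$-formula using an always-accepting and an always-rejecting GNN, with exactly the same four instances. The only difference is that you make explicit the (easy) construction of the constant GNNs, which the paper merely asserts exist.
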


\begin{proof}
	\newcommand{\aGNNall}{\aGNN_{all}}
	\newcommand{\aGNNnone}{\aGNN_{none}}
	Membership comes from \Cref{corollary:gnnspace}. Hardness follow directly from the \PSPACE-hardness of the satisfiability/unsatisfiability/validity of a $\logicKsharpone$-formula (cf.\ Theorem~\ref{theorem:pspacecompletenessKsharpone}). Consider $\aGNNall$ a GNN that accepts all graphs, and $\aGNNnone$ a GNN that rejects all graphs. Consider the following poly-time reductions:
	\begin{itemize}
\item From validity of $\phi$ to \Pone:
$\semanticsof{\aGNNall} =  \semanticsof{\phi}$.
\item From validity of $\phi$ to \Ptwo:
$\semanticsof{\aGNNall} \subseteq \semanticsof{\phi}$.
\item From unsatisfiability of $\phi$ to \Pthree:
$\semanticsof{\phi} \subseteq  \semanticsof{\aGNNnone}$.
\item From satisfiability of $\phi$ to \Pfour:
$\semanticsof{\phi} \cap \semanticsof{\aGNNall} \neq \emptyset$.
        \end{itemize}
\end{proof}

\section{Related Work}
\label{section:relatedwork}
	The links between graded modal logic \cite{DBLP:journals/sLogica/Fattorosi-Barnaba85} and GNNs have already been observed in the literature 
	\cite{barcelo_logical_2020,grohe_logic_2022}.
We know from \citet{barcelo_logical_2020} that a GNN expressible in first-order logic (FO) is also captured by a formula in graded modal logic.
Nonetheless, graded modal logic has significant limitations as it cannot represent fundamental arithmetic properties essential to GNN computations, especially those that are not expressible in first-order logic (FO).
In a similar vein of finding logical counterparts of GNNs, \citet{DBLP:conf/kr/CucalaGMK23} identify a class of GNNs that corresponds to Datalog.
	Many works combine modal/description logic and quantitative aspects: counting 
	(see, \cite{DBLP:conf/wollic/ArecesHD10,DBLP:journals/japll/DemriL10,DBLP:conf/aiml/Hampson16,DBLP:conf/ecai/BaaderBR20}), or probabilities \cite{DBLP:conf/aaai/ShiraziA07}. \citet{DBLP:conf/kr/GallianiKT23} extend the basic description logic $\mathcal{ALC}$ with a new operator 
	to define concepts by listing features with associated weights
	and by specifying a threshold. They prove that reasoning wrt to a knowledge-base is $\EXPTIME$-complete.
	Linear programming and modal logic have already been combined to solve the satisfiability problem of graded/probabilistic modal logic \cite{DBLP:conf/lpar/SnellPW12}. Our logic $\logicKsharp$ can be seen as a `recursification' of the logic used in  
	\citet{DBLP:journals/corr/abs-2206-05070}. They allow for counting successors satisfying a given feature, and not any subformula. Interestingly, they allow for counting also among all vertices in the graph (sort of counting universal modality). Their logic is proven to be undecidable by reduction from the Post correspondence problem. Contrary to our setting, they use their logic only to characterize labeled graphs, but not to give a back and forth comparison with the GNN machinery itself.
	
	There are different ways to address explanation and verification of GNNs. One approach is to use GNNs, 
	which are explainable by design as considered by \citet{DBLP:journals/corr/abs-2205-13234}. This is of 
	course a deep debate: using models easy to use for learning, versus interpretable models 
	\cite{DBLP:journals/natmi/Rudin19}.  The choice depends on the target application.
	\citet{DBLP:journals/pami/YuanYGJ23} provide a survey on methods used to provide 
	explanations for GNNs by using black-box techniques.
        Instance-level explanations 
	explain why a graph has been recognized by an GNN; model-level ones how a given GNN works. There are also methods based on Logic Explained Networks and variants to generate logical 
	explanation candidates $\phi$ \cite{DBLP:journals/corr/abs-2210-07147}. Once a candidate is generated we 
	could imagine to use our problem \Pone (given in the introduction) to check whether $\semanticsof \aGNN = 
	\semanticsof \phi$, and thus being able to fully synthesize a trustworthy explanation. 
	The logic $\logicKsharpone$ and the results presented in this paper are thus precious tools to assist in the search of model-level explanations of GNNs. 
								
\section{Conclusion and Outlook}
\label{section:conclusion}

	We presented logic $\logicKsharpone$, capturing a broad and natural class of GNNs. Furthermore, we 
	proved that the satisfiability problem of $\logicKsharpone$ is \PSPACE-complete, leading to
	direct practical perspectives to solve formal verification and explanation problems regarding this class of GNNs.
	
	There are several directions to go from here. First, we aim to consider a larger class of GNNs. This will 
	require to augment the expressivity of the logic, for instance by adding other activation functions like
	ReLU in an extension of $\logicKsharpone$. Fortunately, SMT solvers have been extended to capture ReLU 
	(for example see \citet{DBLP:conf/cav/KatzBDJK17}), but it is open how a well-defined extension of our
	logic looks like. Similarly, it would be useful to allow for more flexibility in the combination or
	aggregation functions of the considered GNNs. For example, allowing for
	arbitrary feedforward neural networks in the combination functions or considering the mean function
	as aggregation. Another idea is to allow for global readouts in the considered GNNs. A GNN with global
	readout does not only rely on local messages passing from neighbors to neighbors, and can instead consider the whole graph. Presumably, an
	extension of $\logicKsharpone$ would need something akin to a universal modality to captures such behaviour. 
	
	A second interesting direction would be to consider other classes of graphs. For instance, reflexive, transitive 
	graphs. Restricted types of graphs lead to different modal logics: $KT$ (validities on reflexive Kripke 
	models), $KD$ (on serial models), $S4$ (reflexive and transitive models), $KB$ (models where relations are 
	symmetric), $S5$ (models where relations are equivalence relations), etc.\ 
	\cite{DBLP:books/cu/BlackburnRV01}. The logic $\logicKsharpone$ defined in this paper is the counterpart 
	of modal logic $K$ with linear programs. In the same way, we could define $KT^\#$, $S4^\#$, $S5^\#$, etc. 
	For instance, $KB^\#$ would be the set of validities of $\logicKsharpone$-formulas over symmetric 
	models;  $KB^\#$ would be the logic used when GNNs are only used to recognize undirected pointed 
	graphs (for instance persons in a social network where friendship is undirected).
	
	A third direction of research would be to build a tool for solving verification and explainability issues of 
	GNNs using $\logicKsharpone$ as described in Section~\ref{section:overview}. The satisfiability problem of $\logicKsharpone$ is in PSPACE. Many practical problems are in PSPACE (model checking against a linear temporal logic property just to cite one \cite{DBLP:journals/jacm/SistlaC85}). This means that we may rely on heuristics to guide the search, but this needs thorough investigations.

\section*{Ethical Statement}
There are no ethical issues.

\section*{Acknowledgments}

This work was supported by the ANR EpiRL project ANR-22-CE23-0029, and by the
MUR (Italy) Department of Excellence 2023--2027. We would like to thank Stéphane Demri for discussions.

\bibliographystyle{named}
\bibliography{biblio}

\clearpage
	\appendix

\renewcommand{\thesection}{\hspace*{-1.0em}}
\renewcommand{\thesubsection}{\arabic{subsection}}
\section{Appendix}

This appendix contains the following technical material:
\begin{enumerate}
\item Main Definitions 
\item The Modality $\lboxupto{m}$ 
\item Proof that $\logicKsharp$ Can Express Properties That FO Can't (\Cref{example-expressivity-FO}) 
\item Complement of Proof of \Cref{thm:fromLogictoGNN} (Correctness of the Construction) 
\item Proof of the Reduction From the Satisfiability Problem of $\logicKsharpone$ Into $\logicKsharpone_{tree}$-sat (\Cref{lemma-satKone-dag-tree}) 
\item Proof of the Reduction From the Satisfiability Problem of $\logicKsharpone$ Into $\logicKsharp$-sat (\Cref{lemma-satKoneK}) 
\item Relationship Between the Languages of $\logicKsharp$ and EML 
\end{enumerate}

\subsection{Main Definitions}\label{sec:appendix-definitions}
\setcounter{definition}{0}
A \emph{(labeled directed) graph} $G$ is a tuple $(\setvertices, \setedges, \labeling)$ such that $\setvertices$ is a finite set of vertices, $\setedges \subseteq \setvertices \times \setvertices$ a set of directed edges and $\labeling$ is a mapping from~$\setvertices$ to a valuation over a set of atomic propositions. We write  $\labeling(u)(p) = 1$ when atomic proposition $p$ is true in $u$, and $\labeling(u)(p) = 0$ otherwise. Given a graph $G$ and vertex $u \in \setvertices$, we call $(G,u)$ a \emph{pointed graph}.

Let $G = (\setvertices, \setedges, \labeling)$ be a graph. A \emph{state} $x$ is a mapping from $\setvertices$ into $\setR^\dimensionstate$ for some $\dimensionstate$. 
We use the term `state' for both denoting the map $x$ and also the vector $x(v)$ at a given vertex $v$. 
Suppose that the atomic propositions occurring in $G$ are $p_1, \dotsc, p_k$.
The \emph{initial state} $x_0$ is defined by: $$x_0(u) := (\labeling(u)(p_1), \dots, \labeling(u)(p_k)) \in \setR^d$$ for all $u \in V$.
An \emph{aggregation function} $\AGG$ is a function mapping finite multisets of vectors in $\mathbb{R}^{\dimensionstate}$ to vectors in $\mathbb{R}^{\dimensionstate}$.
A \emph{combination function} $\COMB$ is a function mapping a vector in $\mathbb{R}^{2\dimensionstate}$ to vectors in $\mathbb{R}^{d'}$.
A \emph{AC-GNN layer} $l$ of input dimension $d$ is defined by an aggregation function $\AGG$ and a combination function $\COMB$ of matching dimensions, meaning $\AGG$ expects and produces vectors from $\mathbb{R}^d$ and $\COMB$ has input dimensionality $2d$. 
The output dimension of $l$ is given by the output dimension of $\COMB$.
Then, a \emph{AC-GNN} is a tuple $(\layer^{(1)}, \dotsc, \layer^{(\nblayers)}, \CLS)$ where $\layer^{(1)}, \dotsc, \layer^{(\nblayers)}$ are 
$\nblayers$ AC-GNN layers and $\CLS : \setR^\dimensionstate \rightarrow \set{0, 1}$ is a \emph{classification function}. We assume that all GNNs are well-formed in the sense that
output dimension of layer $L^{(i)}$ matches input dimension of layer $L^{(i+1)}$ as well
as output dimension of $L^{(L)}$ matches input dimension of $\CLS$.
\begin{definition}
Here we call a \emph{GNN} an AC-GNN $\aGNN$ where all aggregation functions are given by $\AGG (X) = \sum_{x \in X} x$, 
all combination functions are given by $\COMB(x,y) = \sigmabold(xC+yA +b)$ where $\sigmabold(x)$ is the componentwise application of the \emph{truncated ReLU} $\sigma(x) = max(0, min(1, x))$, where $C$ and $A$ are matrices of integer parameters and $b$ is a vector of integer parameters, and where the classification function is $\CLS(x) = \sum_i a_i x_i \geq 1$, with $a_i$ integers.
\end{definition}

Consider a countable set $\Ap$ of propositions. We define the language of logic $\logicKsharpone$ as the set of formulas generated by the following BNF:
\begin{align*}
	\phi & ::= p \mid \lnot \phi \mid \phi \lor \phi \mid \NTexpression \geq 0 \\ 
	\NTexpression & ::= c \mid \istrue\phi \mid \modalitynumber \phi \mid \NTexpression + \NTexpression \mid c\times \NTexpression 
\end{align*}
where $p$ ranges over $\Ap$, and $c$ ranges over $\mathbb Z$.
The set of subformulas, $\subformulasof{\phi}$ is defined by induction on~$\phi$:
\begin{align*}
	\subformulasof p & = \set{p} \\
	\subformulasof {\lnot \phi} & = \set{\lnot \phi} \union \subformulasof{\phi} \\
	\subformulasof{\phi \lor \psi} & = \set{\phi \lor \psi} \union \subformulasof \phi \union \subformulasof \psi \\
	\subformulasof{\NTexpression \geq 0} & = \set{\NTexpression \geq 0} \union \bigcup \set{\subformulasof\psi \mid \text{$1_\psi$ or $\modalitynumber \psi$ in $\NTexpression$}}
\end{align*}
The modal depth of a formula, $\modaldepthof{\phi}$ and the modal depth of an expression, $\modaldepthof{\NTexpression}$ are defined by mutual induction on $\phi$ and $\NTexpression$:
	\begin{align*}
		\modaldepthof p, \modaldepthof{c} & = 0 \\
		\modaldepthof {\lnot \phi}, \modaldepthof {1_\phi} & = \modaldepthof{\phi} \\
		\modaldepthof {\phi \lor \psi} & = \max(\modaldepthof \phi,\modaldepthof \psi) \\
		\modaldepthof{\NTexpression \geq 0}, \modaldepthof {k\cdot \NTexpression}   &= \modaldepthof{\NTexpression} \\
		\modaldepthof {\modalitynumber \phi} & = \modaldepthof{\phi} + 1 \\
		\modaldepthof {\NTexpression_1 + \NTexpression_2} & = \max(\modaldepthof {\NTexpression_1},\modaldepthof {\NTexpression_2})
	\end{align*}

\begin{definition}
	A DAG of a formula is a graph in which nodes for $c$, $1_{\phi}$, $\modalitynumber \phi$, $\NTexpression + \NTexpression'$, $c \times \NTexpression$ have in-degree $1$.
\end{definition}

As in modal logic, a formula $\phi$ is evaluated in a pointed graph $(G, u)$ (also known as pointed Kripke model). 
We define the truth conditions $(G,u) \models \phi$ ($\phi$ is true in $u$) by 
	\begin{center}
		\begin{tabular}{lll}
			$(G,u) \models p$ & if & $\labeling(u)(p) = 1$, \\
			$(G,u) \models \neg \phi$ & if & it is not the case that $(G,u) \models \phi$, \\
			$(G,u) \models \phi \land \psi$ & if & $(G,u) \models \phi$ and $(G,u) \models \psi$, \\
			$(G,u) \models \NTexpression \geq 0$ & if &  $\semanticsvalue{\NTexpression}{G,u} \geq 0$, \\
		\end{tabular}
	\end{center}
	and the semantics $\semanticsvalue{\NTexpression}{G,u}$ (the value of $\NTexpression$ in $u$) of an expression $\NTexpression$ by mutual induction on $\phi$ and $\NTexpression$ as follows.
	\begin{center}
		$\begin{array}{ll}
			\semanticsvalue{c}{G, u} & = c, \\
			\semanticsvalue{\NTexpression_1+\NTexpression_2}{G, u} & = \semanticsvalue{\NTexpression_1}{G,u}+\semanticsvalue{\NTexpression_2}{G,u}, \\
			\semanticsvalue{c \times \NTexpression}{G, u} & = c \times \semanticsvalue{\NTexpression}{G,u}, \\
			\semanticsvalue{\istrue\phi}{G, u} & = \begin{cases}
				1 & \text{if $(G,u) \models \phi$} \\
				0 & \text{else},
			\end{cases} \\  
			\semanticsvalue{\modalitynumber\phi}{G, u} & = |\{v \in \setvertices \mid (u,v) \in \setedges \text{ and } (G,v) \models \phi\}|.
		\end{array}$
	\end{center}
We define $\semanticsof \phi$ as the set of the pointed graphs $G, u$ such that $G, u \models \phi$.
Furthermore, we say that $\phi$ is \emph{satisfiable} when there exists a pointed graph $G, u$ such that $G, u \models \phi$.
The \emph{satisfiability problem} of $\logicKsharpone$ (resp.\ $\logicKsharp$) is: given a DAG representation of a formula $\phi$ in the language of $\logicKsharpone$ (resp.\ $\logicKsharp$), is $\phi$ satisfiable?

\subsection{The Modality $\lboxupto{m}$}
\label{sec:modalityBoxm}

Remember that we can simulate the $\lbox$ operator with the formula $( \modalitynumber (\lnot \phi) = 0)$, or better, as $\lbox \phi :=( 0  \geq \modalitynumber (\lnot \phi) )$. We can also simulate an operator $\lboxupto{m} \phi$ meaning that the formula $\phi$ holds in all the vertices accessible from the root in $m$ steps or less.
\[
\lboxupto{m} \phi := \bigvee_{0\leq i \leq m} \underbrace{\Box \ldots \Box}_{i \text{ times}} \phi
\]
Observe that the size of a tree representation of $\lboxupto{m} \phi$ grows linearly wrt the size of $\phi$ and quadratically wrt $m$.
\begin{fact}\label{fact:sizeBoxm}
$|\lboxupto{m} \phi| = \mathcal{O}(m(m+|\phi|))$.
\end{fact}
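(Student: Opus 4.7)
The plan is to bound the size of $\lboxupto{m} \phi$ by directly unfolding its definition and summing the cost of each disjunct (equivalently, conjunct, in the main-body variant). First I would note that $\lboxupto{m} \phi$ is the disjunction (or conjunction) of the $m+1$ formulas $\Box^{i}\phi$ for $i=0, 1, \ldots, m$, so the top-level connective structure accounts for $O(m)$ binary nodes and can be ignored in the overall asymptotic bound.

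Next I would compute the size of each $\Box^{i}\phi$ separately. Since $\Box$ is not primitive in $\logicKsharp$/$\logicKsharpone$ but is the abbreviation $\Box\psi := (\modalitynumber(\lnot\psi) \leq 0)$ introduced earlier in the paper, each application of $\Box$ contributes a constant number of additional symbols on top of its argument. Consequently $|\Box^{i}\phi| = O(i) + |\phi|$: the $i$ nested $\Box$-wrappers cost $O(i)$, and a single tree-copy of $\phi$ sits at the bottom, contributing $|\phi|$.

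Finally I would sum over all $i$ from $0$ to $m$:
\[
|\lboxupto{m}\phi| \;=\; O(m) + \sum_{i=0}^{m}\bigl(O(i) + |\phi|\bigr) \;=\; O(m^{2}) + O\bigl(m\cdot|\phi|\bigr) \;=\; O\bigl(m(m+|\phi|)\bigr),
\]
which is exactly the stated bound. There is no real obstacle here; the only point that deserves explicit mention is that we are measuring the tree (or equivalently string) representation, so each of the $m+1$ disjuncts must carry its own full copy of $\phi$, yielding the $m\cdot|\phi|$ term, while the $m^{2}$ term comes from the quadratic total number of $\Box$-layers across all disjuncts.
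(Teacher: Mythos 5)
Your proof is correct and follows essentially the same route as the paper, which simply observes that the tree representation of $\lboxupto{m}\phi$ consists of $m+1$ copies of $\phi$ each wrapped in at most $m$ constant-size $\Box$-abbreviations, giving the linear-in-$|\phi|$ and quadratic-in-$m$ behaviour. Your explicit summation $\sum_{i=0}^{m}(O(i)+|\phi|)$ just spells out the same count in more detail.
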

Modality  $\lboxupto{m}$ enables us to reach the relevant parts of a model of a formula $\psi$, given $m$ is the modal depth of~$\psi$.

\subsection{Proof that $\logicKsharp$ Can Express Properties That FO Can't (\Cref{example-expressivity-FO})}\label{sec:FO-expressivity}

We show that 
the formula $\modalitynumber p \geq \modalitynumber q$ is not expressible by a FO formula $\phi(x)$. We observe that if the property ``for all vertices of a graph $\modalitynumber p \geq \modalitynumber q$'' is not expressible in FO, then the FO formula $\phi(x)$ doesn't exist, because if it existed the property would be expressible in FO by the formula $\forall x \phi(x)$.

For each integer $n > 0$, we consider the graphs $A_n$ and $B_n$ such that every vertices of $A_n$ verify $\modalitynumber p \geq \modalitynumber q$ while this is not the case for $B_n$ :

\begin{center}
	\begin{tikzpicture}
		\node[vertex] (w) at (-0.5, 1) {};
		\node[vertex] (u1) at (-1.5, 0) {$p$};
		\node[vertex] (u2) at (-0.75, 0) {$p$};
		\node at (-0.25, 0) {$\ldots$};
		\node at (-0.9, 1) {$w$};
		\node at (-1.5, -0.4) {$u_1$};
		\node at (-0.75, -0.4) {$u_2$};
		\node at (0.25, -0.4) {$u_n$};
		\node[vertex] (un) at (0.25, 0) {$p$};
		\node[vertex] (v1) at (-1.5, 2) {$q$};
		\node[vertex] (v2) at (-0.75, 2) {$q$};
		\node at (-0.25, 2) {$\ldots$};
		\node at (-1.5, 2.4) {$v_1$};
		\node at (-0.75, 2.4) {$v_2$};
		\node at (0.25, 2.4) {$v_n$};
		\node[vertex] (vn) at (0.25, 2) {$q$};
		\draw[->] (w) edge (v1);
		\draw[->] (w) edge (v2);
		\draw[->] (w) edge (vn);
		\draw[->] (w) edge (u1);
		\draw[->] (w) edge (u2);
		\draw[->] (w) edge (un);
		
		\node[vertex] (bw) at (5, 1) {};
		\node[vertex] (bu1) at (4, 0) {$p$};
		\node[vertex] (bu2) at (4.75, 0) {$p$};
		\node at (5.25, 0) {$\ldots$};
		\node at (4.6, 1) {$w'$};
		\node at (4, -0.4) {$u'_1$};
		\node at (4.75, -0.4) {$u'_2$};
		\node at (5.75, -0.4) {$u'_n$};
		\node[vertex] (bun) at (5.75, 0) {$p$};
		\node[vertex] (bv1) at (3.75, 2) {$q$};
		\node[vertex] (bv2) at (4.5, 2) {$q$};
		\node at (5, 2) {$\ldots$};
		\node at (3.75, 2.4) {$v'_1$};
		\node at (4.5, 2.4) {$v'_2$};
		\node at (5.5, 2.4) {$v'_n$};
		\node at (6.25, 2.4) {$v'_{n+1}$};
		\node[vertex] (bvn) at (5.5, 2) {$q$};
		\node[vertex] (bvn1) at (6.25, 2) {$q$};
		\draw[->] (bw) edge (bv1);
		\draw[->] (bw) edge (bv2);
		\draw[->] (bw) edge (bvn);
		\draw[->] (bw) edge (bvn1);
		\draw[->] (bw) edge (bu1);
		\draw[->] (bw) edge (bu2);
		\draw[->] (bw) edge (bun);
		
		\node at (-0.5, -1) {$A_n$};
		\node at (5, -1) {$B_n$};
	\end{tikzpicture}
\end{center}

An $n$-round Ehrenfeucht-Fra\"iss\'e game is a game between two players, the spoiler and the duplicator played on two graphs $A = (V_A,E_A,\ell_A)$ and $B = (V_B,E_B,\ell_B)$. We suppose that at each round the spoiler picks one graph and a vertex in this graph. The duplicator then chooses a vertex on the other graph. We have $n$ vertices ($a_1$,$a_2$,\ldots,$a_n$) chosen in $A$ and $n$ vertices ($b_1$,$b_2$,\ldots,$b_n$) chosen in $B$. The duplicator wins if and only if for $1 \leq i,j \leq n$ :

$a_i=a_j \Leftrightarrow b_i = b_j$

$(a_i,a_j) \in E_A \Leftrightarrow (b_i,b_j) \in E_B$

$\ell(a_i)= \ell(b_i)$

On the graphs $A_n$ and $B_n$, the duplicator wins the Ehrenfeucht-Fra\"iss\'e game with $n$ rounds: if the spoiler chooses $w$ (resp.\ $w'$) the duplicator chooses $w'$ (resp. $w$), if the spoiler chooses some $u_i$ or $v_i$ (resp.\ $u'_i$ or $v'_i$) the duplicator chooses $u'_j$ or $v'_j$ (resp.\ $u'_j$ or $v'_j$). (If the world chosen by spoiler has not been chosen in the previous round the duplicator pick a fresh index $j$. Otherwise, the duplicator picks the $j$ corresponding to the world chosen in the previous rounds). Since there are only $n$ distinct values that $i$ can take, the duplicator will win the game in $n$ rounds with her strategy. Thus the property ``for every vertices of a graph, $\modalitynumber p \geq \modalitynumber q$'' is not expressible in FO. 

\subsection{Complement of Proof of \Cref{thm:fromLogictoGNN} (Correctness of the Construction)}
\label{sec:compl-proof-logic-to-GNN}

Let $l_i$ be the $i$th layer of $\aGNN_{\phi}$ and $\varphi_j$ such that $j \leq j$.
We show that for all vertices $v$ in some graph holds that $(x_i(v))_j = 1$ if $\phi_j$ is satisfied in $v$ and $(x_i(v))_j = 0$ if $\phi_j$ is not satisfied in $v$ . We call this property $(*)$.
Consider $l_1$. This implies that $\varphi_j = p_j$. We see that $C$, $A$ and $b$ are built such that they do not alter dimensions $1$ to $m$, namely those corresponding
to propositions. Therefore, property $(*)$ holds for $x_1$.  Now, assume that this property holds for $x_{i-1}$. Let $j < i$. Then, the semantics of $\varphi_j$ are already
represented in $x_{j-1}$. Furthermore, $\COMB_i$ preserves these representations and, thus, $\varphi_j$ is correctly represented in $x_i$. Now, consider the case
$j=i$. By assumption, the semantics of all subformulas of $\varphi_j$ are represented in $x_{i-1}$, meaning that the inputs $x$ and $y$ of $\COMB_i$ contain these
representations for $v$ respectively its neighbours. By design of $\COMB_i$, the formula $\varphi_j$ is then correctly evaluated. Therefore, $x_i$ fulfills property $(*)$.
Given that $(*)$ holds for all layers, we have that $x_n$ represents for each vertex $v$ and subformula $\varphi_i$ whether $v$ satisfies $\varphi_i$,
including $\varphi_n = \varphi$. In the end, $\CLS$ checks whether $x_n \geq 1$, which is equivalent to checking whether $\varphi$ is satisfied.

Next, we argue that we can build $\aGNN_{\phi}$ such that the number of layers $|\aGNN_{\phi}| \in 
\mathcal{O}(\modaldepthof{\varphi})$. 
However, we can use the same construction as above, but first we transform $\varphi$ 
into a formula $\varphi' \in \logicKsharpone$ by transforming formulas at the same modal level
into conjunctive normal form (CNF). In general, this leads to an exponential blow up of $\varphi'$. But,
now it is ensured that each such modal level has a nesting depth of boolean operators of at most $2$.
In other words, there exists an enumeration $(\phi_1,\ldots,\phi_{L})$ of subformulas of $\phi'$ such that 
$\phi_L = \phi'$ and with $L \in \mathcal{O}(\modaldepthof{\varphi})$. Now, the key insight is that on the GNN side we can build
a single layer in the manner of above, recognizing a single but arbitrary large disjunction or conjunction: 
assume that $\varphi_i = \bigvee_{j \in J_\textit{pos}} \psi_j \vee \bigvee_{j \in J_\textit{neg}} \neg \psi_{j} $. Then, we set 
$C_{ji} = 1$ for each $j \in J_\textit{pos}$ and  $C_{ji} = -1$ if $j \in J_\textit{neg}$ and $b_i = |J_\textit{neg}|$. Analogously,
assume that $\varphi_i = \bigwedge_{j \in J} \psi_j$. Then, we set $C_{ji} = 1$ for each $j \in J$ and $b_i = -|J| + 1$.
The correctness argument for this construction is exactly the same as above.

\subsection{Proof of the Reduction From the Satisfiability Problem of $\logicKsharpone$ Into $\logicKsharpone_{tree}$-sat (\Cref{lemma-satKone-dag-tree})}
\label{proof-lemma-satKone-dag-tree}

Let $\phi$ be a $\logicKsharpone$ formula, represented as a DAG $D_\phi = (N, \rightarrow)$.
(Arithmetic subexpressions may appear more than once. But formulas can, too, since there is no assumption of minimality.)
Given a node $n \in N$, we note $dag(n)$ the subDAG of $D_\phi$ with root $n$ generated by the transitive closure of $\rightarrow$.
Let $N_f \subseteq N$ be the set of nodes representing a formula (as opposed to an arithmetic expression), and let $\mu$ be the map from a node $n \in N_f$ to the subDAG $dag(n)$.
Let $<_f$ be a topological sorting of $N_f$.
\begin{example}\label{ex:DAG_N_map_topo}
	Consider the DAG depicted on \Cref{figure-dag}(a), representing a formula $\phi$.
	Let $N_f = \{1,2,3,4,5\}$, and let
	$\mu(1) \mapsto \phi$, $\mu(2)  \mapsto \modalitynumber (p \land q) \geq 1_{p \land q}$, $\mu(3) \mapsto p \land q$,
	$\mu(4)  \mapsto p$, and $\mu(5)  \mapsto q$.
	Let $<_f$ be the topological sorting of $N_f$ with $<_f = 1,2,3,4,5$.
\end{example}

We are going to (i)~reduce $D_\phi$ to a single node containing a propositional variable representing $\phi$, and (ii)~build a tree $\Psi$ to store the semantics of the original nodes in $D_\phi$. Let $\Psi = \top$, thus a tree with one node containing $\top$.
Iterating over the elements $n$ from the end to the start of $<_f$:
\begin{enumerate}
	\item Introduce a fresh propositional variable $p_n$, represented as a DAG/tree with only one node.
	\item Let $\Psi := \Psi \land (p_n \leftrightarrow cp(\mu(n)))$, where $cp(\mu(n))$ is a copy of the DAG mapped by $\mu(n)$.
	\item Replace $n$ in $D_\phi$ with the single node containing $p_n$ and remove the edges going out from it.
\end{enumerate}
At the end, define the tree\footnote{Notice that in \Cref{section:complexitylogic}, for explicit clarity, we suggest to use $\lboxupto{m}$ for every conjunct of $\Psi$. Here, we use the (shorter) equivalent formula $\lboxupto{m} \Psi$ instead. This has no impact on the correctness of the arguments since $\lboxupto{m} (\psi_1 \land \psi_2) \leftrightarrow \lboxupto{m} \psi_1 \land  \lboxupto{m}\psi_2$ is a theorem of all standard modal logics.
} \[\phi_t = p_n \land \lboxupto{m} \Psi . \]

\begin{example}
Continuing \Cref{ex:DAG_N_map_topo},
we introduce the propositional variables $p_5$, $p_4$, $p_3$, $p_2$, and $p_1$.
We define the tree 
\begin{align*}
\Psi = & (p_5 \leftrightarrow q) \land
 (p_4 \leftrightarrow p) \land
 (p_3 \leftrightarrow p_4 \land p_5) \land\\
& (p_2 \leftrightarrow \modalitynumber p_3 \leq 1_{p_3}) \land
 (p_1 \leftrightarrow p_3 \lor p_2)
\end{align*}
and finally
$\phi_t = p_1 \land \lboxupto{1}\Psi$.
\end{example}

At every iteration, $\Psi$ can be represented by the DAG/tree:

\begin{tikzpicture}[yscale=0.7, baseline=0mm]
		\node (1) at (0, 0) {$\land$};
		\node (2) at (1, 0.5) {$\Psi$};
		\node (3) at (1, -0.5) {$\land$};

		\node (31) at (2, 0.5) {$\lor$};
		\node (32) at (2, -1.5) {$\lor$};

                \node (311) at (3, 0.5) {$\lnot$};
                \node (312) at (3, -0.75) {$cp(\mu(n))$};
                \node (3111) at (4.5, 0.5) {$p_{\mu(n)}$};

                \node (321) at (3, -2.25) {$\lnot$};
		\node (322) at (3, -3.25) {$p_{\mu(n)}$};
                \node (3211) at (4.5, -2.25) {$cp(\mu(n))$};

		\draw[->] (1) -- (2);
		\draw[->] (1) -- (3);
		\draw[->] (3) -- (31);
		\draw[->] (3) -- (32);

		\draw[->] (31) -- (311);
		\draw[->] (31) -- (312);
		\draw[->] (311) -- (3111);

		\draw[->] (32) -- (321);
		\draw[->] (32) -- (322);
		\draw[->] (321) -- (3211);
                \end{tikzpicture}

So, at every iteration, $|\Psi| \leq \mathcal{O}(|\Psi| + 2|\phi|)$.
The number of nodes in the DAG representation of $\phi$ cannot be larger than $|\phi|$.
Hence, the size of $\Psi$ at the end will be at most $\mathcal{O}(2|\phi|^2)$.
Considering \Cref{fact:sizeBoxm}, the size of $\phi_t$ is thus at most $\mathcal{O}(\modaldepthof{\phi}(\modaldepthof{\phi}+2|\phi|^2)) \leq
\mathcal{O}(2|\phi|^3)$.

By construction and by the semantics of $\logicKsharpone$, we have that
at every iteration, $\phi$ is satisfiable iff  $D_\phi \land \lboxupto{m}\Psi$ is satisfiable. So
$\phi_t$ is satisfiable iff $\phi$ is satisfiable.

\subsection{Proof of the Reduction From the Satisfiability Problem of $\logicKsharpone$ Into $\logicKsharp$-sat (\Cref{lemma-satKoneK})}
\label{proof-lemma-satKoneK}

We will show that any $\logicKsharpone$ formula (that is not a $\logicKsharp$ formula), can be rewritten in poly-time into an equi-satisfiable $\logicKsharpone$ formula containing one less $1_\chi$ arithmetic subexpression. This rewriting can then be iterated until we obtain a pure $\logicKsharp$ formula.

\newcommand\pindex{k}
\newcommand\chipindex{{\chi_\pindex}}
\newcommand\taupindex{{\tau_\pindex}}

Let $\phi_\pindex$ be a $\logicKsharpone$ formula, and $1_\chipindex$ be an arithmetic subexpression of $\phi_\pindex$ such that $\chipindex$ is a $\logicKsharp$ formula.
In order to get rid of the arithmetic subexpression $E_\pindex = 1_\chipindex$ in $\phi_\pindex$, we simulate the truth of $\chipindex$ by the presence of a unique \emph{impure} successor labeled by the fresh atom $p_\chipindex$, and the falsity of $\chipindex$ by the absence of an \emph{impure} successor labeled by the atom $p_\chipindex$. Henceforth, the value of $1_\chipindex$ can be captured by the value of $\modalitynumber p_\chipindex$.

The presence/absence of a unique impure successor is captured by the formula $\zeta^\modalitynumber_\pindex$ defined next, in which $m$ is the modal depth of $\phi$. 
\[
\zeta^\modalitynumber_\pindex  := \lboxupto{m} \left (
(\chipindex \limply (\modalitynumber p_\chipindex = 1) ) \land (\lnot \chipindex \limply (\modalitynumber p_\chipindex = 0))
\right )
\]
Now we define $\taupindex(\phi)$ by induction:

\begin{align*}
	\taupindex(p) & = p \\
	\taupindex(\lnot \phi) & = \lnot \taupindex(\phi) \\
	\taupindex(\phi \lor \psi) & = \taupindex(\phi) \lor \taupindex(\psi) \\
	\taupindex(\NTexpression \geq 0) & = \taupindex(\NTexpression) \geq 0 \\
	\taupindex(\NTexpression+\NTexpression') & = \taupindex(\NTexpression) + \taupindex(\NTexpression') \\
	\taupindex(c \times \NTexpression) & = c \times \taupindex(\NTexpression) \\    
	\taupindex(1_\phi) & = \begin{cases}
		\modalitynumber p_\chipindex & \text{when } \phi = \chipindex \\
		1_{\taupindex(\phi)} & \text{otherwise}
	\end{cases}\\
	\taupindex(\modalitynumber \phi) & = \modalitynumber (\taupindex(\phi) \land \lnot p_\chipindex)
\end{align*}
Finally we define $tr(\phi_\pindex) := \taupindex(\phi_\pindex) \land \zeta^\modalitynumber_\pindex$.

The size of $tr(\phi_\pindex)$ is only polynomially larger than $\phi_\pindex$. Indeed,  $|\taupindex(\phi_\pindex)| \leq \mathcal{O}(|\phi_k|))$ and (considering \Cref{fact:sizeBoxm} again) $|\zeta^\modalitynumber_\pindex| = \mathcal{O}(\modaldepthof{\phi_k}(\modaldepthof{\phi_k} + 2|\chi_k|)) \leq \mathcal{O}(3|\phi_k|^2)$.

Let us prove that $\phi_\pindex$ is satisfiable iff $tr(\phi_\pindex)$ is satisfiable.

\noindent \emph{Left to right.}~
Suppose $\phi_\pindex$ is satisfiable. There is a pointed graph $G, u$ such that $G, u \models \phi_\pindex$, with $G = (\setvertices, \setedges, \labeling)$.
We define $G' = (\setvertices', \setedges', \labeling')$, where:
\begin{itemize}
	\item $\setvertices' = \setvertices \cup \{ v_{impure} \mid G, v \models \chipindex \} $
	\item $\setedges' = \setedges \cup \{ (v,v_{impure}) \mid G, v \models \chipindex \ \}$
	\item $\labeling'(v)(p_\chipindex) = \begin{cases}
		1 & \text{if } v \not\in \setvertices \text{ ($v$ is impure)}\\
		0 & \text{otherwise}
	\end{cases}$
	\item $\labeling'(v)(p) = \labeling'(v)(p)$ when $p \not = p_\chipindex$.
\end{itemize}

By construction, we have $G', u \models \zeta^\modalitynumber_\pindex$. It remains to prove that $G', u \models \taupindex(\phi_\pindex)$. We proceed by induction on the complexity of formulas and arithmetic expressions. The base cases of $p$ and $c$ are immediate. We show the non-trivial cases: arithmetic expressions $\modalitynumber \phi$ and $1_\phi$ for some formula $\phi$. Suppose towards induction that $G, v \models \phi$ iff $G', v \models \taupindex(\phi)$ for every $v \in \setvertices$.
We have $\semanticsvalue{\modalitynumber \phi}{G,u}$
\[
\begin{array}{ll}
	& =
	|\{v \in \setvertices \mid (u,v) \in \setedges \text{ and } G, v \models \phi \}| \\
	& =
	|\{v \in \setvertices' \mid (u,v) \in \setedges \text{ and } G', v \models \taupindex(\phi) \}| \hfill \text{ (by i.h.)} \\
	& =
	|\{v \in \setvertices' \mid (u,v) \in \setedges' \text{ and } G', v \models \lnot p_\chipindex\\ & \hfill \text{ and } G', v \models \taupindex(\phi) \}| \\
	& =
	\semanticsvalue{\modalitynumber (\taupindex(\phi) \land \lnot p_\chipindex)}{G',u}\\
	& =
	\semanticsvalue{\taupindex(\modalitynumber\phi)}{G',u}\\
\end{array}
\]
When $\phi \not = \chipindex$, $\semanticsvalue{1_\phi}{G,u} = \semanticsvalue{\taupindex(1_\phi)}{G,u}$ follows directly from the i.h. When $\phi = \chipindex$, we have:
\[
\begin{array}{ll}
	\semanticsvalue{1_\phi}{G,u}
	& = \begin{cases}
		1 & \text{if } G, u \models \chipindex\\
		0 & \text{otherwise.}
	\end{cases} \hfill \text{ (by definition)} \\
	& = \semanticsvalue{\modalitynumber p_\chipindex}{G',u} \hfill \text{ (by construction)}\\
	& = \semanticsvalue{\taupindex(1_\chipindex)}{G',u}
\end{array}
\]
So, $G', u \models \taupindex(\phi_\pindex)$.
So $G', u \models \taupindex(\phi_\pindex) \land \zeta^\modalitynumber_\pindex$.
So $tr(\phi_\pindex)$ is satisfiable.

\medskip
\noindent \emph{Right to left.}~
Suppose $tr(\phi_\pindex)$ is satisfiable.
There is a pointed graph $G', u$ such that $G', u \models tr(\phi_\pindex)$, with $G' = (\setvertices', \setedges', \labeling')$.
We define $G = (\setvertices, \setedges, \labeling)$, where:
\begin{itemize}
	\item $\setedges = \setedges' \setminus \{ (v,w) \in \setedges' \mid G', w \models p_\chipindex \}$
	\item $\setvertices = \setvertices'$
	\item $\labeling(v)(p) = \labeling'(v)(p)$, for $p \not = p_\chipindex$
\end{itemize}
Since $G', u \models tr(\phi_\pindex)$, we have in particular that $G', u \models \zeta^\modalitynumber_\pindex$. It means that every state in $G'$ reachable from $u$ in no more steps than the modal depth of $\phi_\pindex$ has one (and exactly one) $p_\chipindex$-successor when $\chipindex$ is true in that state, and none otherwise.

We must show that $G, u \models \phi_\pindex$. We proceed by induction.
One base case is $\modalitynumber p_\chipindex$. We have that $\semanticsvalue{\modalitynumber p_\chipindex}{G',u}$ is $1$ if $G', u \models \chipindex$ and $0$ otherwise; which is $\semanticsvalue{1_\chipindex}{G',u} = \semanticsvalue{1_\chipindex}{G,u}$.
Suppose towards induction that $G', v \models \taupindex(\phi)$ iff $G, v \models \phi$ for every $v \in \setvertices$.
The only remaining non-trivial case is $\modalitynumber (\taupindex(\phi) \land \lnot p_\chipindex)$. We have: $\semanticsvalue{\modalitynumber (\taupindex(\phi) \land \lnot p_\chipindex)}{G',u}$

\[
\begin{array}{ll}
	& = | \{ v \in \setvertices' \mid (u,v) \in \setedges' \text{ and } G', v \models \taupindex(\phi) \land \lnot p_\chipindex \} |\\
	& = | \{ v \in \setvertices' \mid (u,v) \in \setedges' \text{ and } G', v \models \lnot p_\chipindex \\ & \hfill \text{ and } G', v \models \taupindex(\phi) \} |\\
	& = | \{ v \in \setvertices' \mid (u,v) \in \setedges' \text{ and } G', v \models \lnot p_\chipindex  \\ & \hfill \text{ and } G, v \models \phi \} | \hfill \text{ (by i.h.)} \\
	& = | \{ v \in \setvertices \mid (u,v) \in \setedges \text{ and } G, v \models \phi \} |\\
	& = \semanticsvalue{\modalitynumber \phi}{G,u}
	
\end{array}
\]

So $G, u \models \phi_\pindex$. Hence $\phi_\pindex$ is satisfiable.

\medskip

To finish the proof, it suffices to iterate and use the satisfiability preservation of $tr$.
Let $\phi_0$ be an arbitrary $\logicKsharpone$ formula.
Let $i = 0$.
While $\phi_i$ contains $1_{\chi_i}$ subformulas, (1)~pick one such that $\chi_i$ is a $\logicKsharp$ formula, (2)~compute $\zeta^\modalitynumber_i$, (3)~compute $\phi_{i+1} := \tau_i(\phi_i)$, (4)~increment $i$.
At the end, the formula $\phi_i \land \bigwedge_{0 \leq j < i} \zeta^\modalitynumber_j$ is satisfiable iff $\phi_0$ is satisfiable, and only polynomially larger than $\phi_0$.

\subsection{Relationship Between the Languages of $\logicKsharp$
and EML}
\label{proof-relationship-ksharp-EML}

The logic $\logicKsharp$ is not exactly a `fragment' of EML~\cite{DBLP:journals/japll/DemriL10}. In particular, the language of EML does not give direct access to integer constants $c$ in arithmetic expressions. But it is not a limitation, and the transformation is rather trivial.

W.l.o.g.\ and without exponential blowup in size, $\logicKsharp$ subformulas $\NTexpression \geq 0$ are of the form $\sum_i c_i \times \modalitynumber{\phi_{i}} + c \geq 0$, and $c_i \not = 0$ for every $i$. 
The $\logicKsharp$ formula $\sum_i c_i \times \modalitynumber{\phi_{i}} + c \geq 0$ is then transformed as follows:
\begin{itemize}
\item if $c \geq 0$: we have $\NTexpression \geq 0$ iff $\sum - c_i \times \modalitynumber{\phi_{i}} - c \leq 0$ iff $\sum - c_i \times \modalitynumber{\phi_{i}}  \leq c$ iff $\lnot (\sum - c_i \times \modalitynumber{\phi_{i}}  > c)$ which is an EML formula.
\item if $c < 0$: we have $\NTexpression \geq 0$ iff $\sum c_i \times \modalitynumber{\phi_{i}} \leq - c$ iff $\lnot (\sum c_i \times \modalitynumber{\phi_{i}} > - c)$ which is an EML formula.
\end{itemize}

\end{document}